\def\BState{\State\hskip-\ALG@thistlm}
\algnewcommand{\Or}{\textbf{or}\,}
\algnewcommand\algorithmicswitch{\textbf{switch}}
\algnewcommand\algorithmiccase{\textbf{case}}
\def\myMSFigureScale{0.22}
\def\myLineScale{1}
\newcommand{\tikzcircle}[2][red,fill=red]{\tikz[baseline=-0.5ex]\draw[#1,radius=#2] (0,0) circle ;}
\newcommand{\tikzsquare}[2][red,fill=red]{\tikz\draw[#1] (0,0) rectangle (#2, #2) ;}
\newcommand{\tikzline}[2][red,thick]{\tikz\draw[#1,#2] (0,0) -- (0.5em, 0.5em) ;}
\newcommand{\R}{\mathbb{R}}
\newcommand{\set}[1]{\{#1\}}
\newcommand{\argmin}{\mathop{\mathrm{argmin}}}
\newcommand*{\defeq}{\mathrel{\rlap{\raisebox{0.3ex}{$\m@th\cdot$}}\raisebox{-0.3ex}{$\m@th\cdot$}}=}
\begin{document}

\title{Lazy Lifelong Planning for Efficient Replanning \\ in Graphs with Expensive Edge Evaluation}


\author{\authorblockN{Jaein Lim}
\authorblockA{School of Aerospace Engineering\\
Georgia Institute of Technology\\
Atlanta, Georgia 30332--0150\\
Email: jaeinlim126@gatech.edu}
\and
\authorblockN{Siddhartha Srinivasa}
\authorblockA{ School of Computer Science\\ \& Engineering\\
University of Washington \\
Seattle, WA, 98195-2355\\
Email: siddh@cs.uw.edu}
\and
\authorblockN{Panagiotis Tsiotras}
\authorblockA{School of Aerospace Engineering\\
Institute for Robotics \& Intelligent Machines\\
Georgia Institute of Technology\\
Atlanta, Georgia 30332--0150\\
Email: tsiotras@gatech.edu}}


%

\maketitle

\begin{abstract}
We present an incremental search algorithm, called Lifelong-GLS, which combines the vertex efficiency of Lifelong Planning A* (LPA*) and the edge efficiency of Generalized Lazy Search (GLS) for efficient replanning on dynamic graphs where edge evaluation is expensive. We use a lazily evaluated LPA* to repair the cost-to-come inconsistencies of the relevant region of the current search tree based on the previous search results, and then we restrict the expensive edge evaluations only to the current shortest subpath as in the GLS framework. 
The proposed algorithm is complete and correct in finding the optimal solution in the current graph, if one exists. 
We also show that the search returns a bounded suboptimal solution, if an inflated heuristic edge weight is used and the tree repairing propagation is truncated early for faster search.
Finally, we show the efficiency of the proposed algorithm compared to the standard LPA* and the GLS algorithms over consecutive search episodes in a dynamic environment. 
For each search, the proposed algorithm reduces the edge evaluations by a significant amount compared to the LPA*. Both the number of vertex expansions and the number of edge evaluations are reduced substantially compared to GLS, as the proposed algorithm utilizes previous search results to facilitate the new search. 
\end{abstract}

\IEEEpeerreviewmaketitle

\section{Introduction}
Plans change in the real world. This is because obtaining accurate models of the complex world is difficult, and the models themselves become quickly out of date when the world is uncertain or changing. 
Hence, replanning is an essential problem for every decision-making agent with partial knowledge operating in a dynamic environment. 
The need for efficient replanning has been manifested in a wide range of applications, typically in situations where the world is abstracted via graph representations. 
Such abstractions allow tractable search algorithms to find an optimal path in the given graph, but when the underlying graph changes because either the world or the model of the world changes, then the plan needs to be updated accordingly. 

Consider the following examples. A mobile robot traversing through an unknown terrain needs to repair the path whenever the path is found to be infeasible, or a better path becomes available as the robot gains more information about the terrain~\cite{Koenig2005}. 
For sampling-based motion-planning problems, where the search space is asymptotically approximated with a series of graphs of increasing density, the choice of the replanning strategy to improve the current search tree dictates the convergence rate to an asymptotically optimal solution ~\cite{Arslan2013, Gammell2015, Strub2020a, Strub2020b}. 
For distributed multi-agent problems with cooperative communication~\cite{Torreno2014, Nissim2014, Lim2020}, where a planning entity possesses only local perception of the search space, each agent must therefore resolve any inconsistencies revealed online as the communication refines the local perception. Replanning is necessary to refine each local plan to achieve global consensus. 

Incremental search methods~\cite{Ramalingam1996, Koenig2004} store the previous search tree in order to identify the inconsistent portion of the tree when the graph changes in order to efficiently repair the current tree. 
Any identified inconsistencies are propagated onward to make the search tree consistent again with respect to the current graph changes without having to solve the problem from scratch.
In particular, Lifelong Planning A* (LPA*)~\cite{Koenig2004} efficiently restricts repairs to only the optimal path candidate guided by a consistent heuristic and a priority queue similar to that of the A* algorithm~\cite{Hart1968}. This means that LPA* heuristically delays the expansion of inconsistent vertices until repairing becomes necessary in order to find the new optimal solution with respect to the current graph.
Given a modified graph, LPA* is provably efficient in the sense that a vertex is never expanded more than twice and any inconsistent vertices outside the relevant region are never expanded~\cite{Koenig2004}. Hence, LPA* can find the new optimal solution significantly faster than searching from scratch, especially when the change is small and less relevant to the new optimal path. 
This efficiency of LPA* has been the backbone to numerous applications in which re-planning is crucial~\cite{Koenig2005, Arslan2013, Gammell2015, Strub2020a, Strub2020b}.  

Unfortunately, the design of LPA* is tailored to reducing the number of vertex expansions to find the new optimal solution, and it is indifferent to the number of edge evaluations. 
This property of LPA* often results in unnecessarily excessive edge evaluations to find the new optimal solution, causing significant overhead in problem domains where edge evaluation dominates computation time. 
For example, in motion planning problems~\cite{Kavraki1996, Lavalle1998, LaValle2001, Webb2013}, an edge evaluation consists of multiple collision checks in the configuration space, solving two-point boundary value problems, or propagating the system dynamics with a closed-loop controller. 
In this paper, we seek to remedy the excessive edge evaluations of LPA* by borrowing ideas from the lazy search framework of~\cite{Bohlin2000, Cohen2014, Hauser2015, Dellin2016, Mandalika2018, Mandalika2019}. Before we delve into these ideas, let us first characterize two aspects of LPA* that attribute to excessive edge evaluations.

LPA* needs to update the edge values of all changed edges compared to the previous graph, in order to identify the inconsistent vertices so that repairing propagation can begin in the current graph. 
LPA* evaluates all these changed edges before repairing propagation commences, regardless of whether these changes are relevant to the current problem or not. 
Note that among the inconsistent vertices identified by these edge evaluations, only the relevant inconsistent vertices are eventually expanded by LPA* during repair propagation. 
In other words, if an edge evaluation results in some inconsistency that is irrelevant to the current problem such that LPA* never uses this new information to find the new optimal solution, then this edge evaluation is not necessary.   
When the graph changes significantly, evaluating those edges can be very expensive. 

LPA* also repairs the inconsistent part of the search tree in the same way A* expands the frontier vertices with the lowest cost estimates (the so called, $f$-value) as a best-first search. This implies that LPA* requires exhaustive edge evaluations upon expanding a vertex, as A* would evaluate all the incident edges upon expanding a vertex. 
Specifically, LPA* needs to evaluate all the incident edges when an inconsistent vertex is expanded to find a new optimal parent vertex, which then leads to the propagation of the inconsistency information to all of its children vertices, and so on.
This behavior is often referred as a ``zero-step lookahead" in the literature~\cite{Mandalika2018, Mandalika2019}, where no heuristic estimate of the edge value is utilized to prioritize the next best vertex to expand.\footnote{This terminology is different from the one-step lookahead used in LPA*~\cite{Koenig2004}, which refers to the $rhs$-value of an inconsistent vertex being one-step better informed than its $g$-value.} Hence, expanding a vertex incurs actual evaluations of all incident edges, regardless of each edge’s potential to be a part of the shortest path.   

The issue of excessive edge evaluations has been explicitly addressed within the lazy search framework in order to reduce the actual number of edge evaluations by delaying these evaluations as much as possible~\cite{Bohlin2000, Cohen2014, Hauser2015, Gammell2015, Dellin2016, Mandalika2018, Mandalika2019}. 
The main idea of the lazy search framework is to delay the actual evaluation of the edges using a $n$-step lookahead ($n>0$), by prioritizing the expansion of the subpath constrained with an $n$-number of heuristically evaluated edges.
For example, when a lazy search algorithm with the one-step lookahead expands a vertex, it heuristically estimates the values of the incident edges instead of actually evaluating them, unless the values are already known. Then, the children vertices are inserted in the priority queue with the total cost estimate of the path constrained to this heuristically evaluated edge. 
The edge is actually evaluated only when the child vertex (equivalently, the subpath to the child vertex) is chosen to be expanded. Various algorithms such as Lazy Weighted A* (LWA*)~\cite{Cohen2014}, Batch Informed Trees* (BIT*)~\cite{Gammell2015}, and Class Ordered A* (COA*)~\cite{Lim2020b} use this one-step lookahead strategy to mitigate excessive edge evaluations. 

In \cite{Mandalika2018} it was shown that the number of edge evaluations decreases as the lookahead steps increase.
In fact, using an infinite lookahead step (LazyPRM~\cite{Bohlin2000}, LazySP~\cite{ Dellin2016}), i.e., restricting the edge evaluations to the shortest path to the goal (instead of subpaths), is proven to be edge optimal,  that is, the number of edge evaluations is minimized.
In essence, infinite-lookahead algorithms heuristically grow a search tree until the path to the goal is found, and only then the edges along the heuristically shortest path are evaluated to disprove its feasibility. 
The underlying heuristic search tree is repaired with the true edge values upon the actual evaluation, so that any infeasible shortest path is eliminated. 
This is iterated until all edges are feasible along the current shortest path,  otherwise no solution exists. 

It is worth noting that the edge optimality of LazySP comes at the expense of many vertex expansions. This is because the heuristic tree is grown beyond a possibly infeasible edge, and therefore, the subtree must be repaired when the edge is revealed to be infeasible upon evaluation.
On the other hand, zero-step lookahead algorithms, such as the A* algorithm, do not grow the subtree beyond any infeasible edge, therefore minimizing the number of vertex expansions.
In \cite{Mandalika2018} the relationship between the number of lookahead steps and the total computation time to solve the problem has been studied extensively to highlight the tradeoffs between vertex rewiring and edge evaluation in different problem domains.

An $n$-step lookahead algorithm (e.g., LRA*~\cite{Mandalika2018}) strikes a balance between edge evaluation and vertex expansions by growing the heuristic tree with a number of unevaluated edges before the evaluation reveals edge feasibility along the subpath to the goal on the heuristic tree. 
Finally, Generalized Lazy Search (GLS) encompasses various lookahead strategies with a user-defined algorithmic toggle between vertex rewiring and edge evaluation \cite{Mandalika2019}. With a proper choice of the toggle from the search and the evaluation, GLS hence reduces to LazySP, LRA*, or LWA*. 

In this paper, we extend GLS to incorporate lifelong planning behavior, by maintaining a lazy LPA* search tree with non-overestimating heuristic edges. In other words, we restrict the actual edge evaluations of LPA* to only those edges that could possibly be part of the optimal path in the current graph.
We reduce the excessive edge evaluations of LPA* in terms of the two aspects discussed above: any irrelevant edges will never be evaluated upon  graph changes before the search, and only the edges that could possibly be on the optimal path of the current graph will be evaluated during the search.
We leave the choice of the lookahead strategy to be quite general, as in the GLS framework, to allow for a tradeoff between the search and the evaluation steps to adapt to different problem domains. 
Hence, we attain a version of Lifelong-LazySP on one end, and a version of Lifelong-LWA* on the other end, by adopting an infinite or an one-step lookahead, respectively. 

The proposed algorithm, Lifelong-GLS (L-GLS) is complete and finds the optimal solution in the current graph. 
Compared to GLS, the proposed algorithm can possibly find the optimal solution faster by reusing previous search results. 
Compared to LPA*, our algorithm reduces significantly the number of edge evaluations. 
Moreover, when the heuristic edge values are inflated and the inconsistency repairing step is truncated for faster search, then the solution returned by L-GLS is bounded suboptimal.

\section{Problem Formulation}

We first introduce the variables and relevant notation that will be used throughout the rest of the paper. 
\subsection{Lazy Weight Function}
Let $G =(V,E)$ be a graph with vertex set $V$ and edge set $E.$
For a vertex $v\in V$, we denote the predecessor vertices of $v$ with $pred(v)$ and its successor vertices with $succ(v)$.
For each edge $e\in E$, a weight function $w: E\to (0,\infty]$ assigns a positive real number, including infinity, to this edge, e.g., the distance to traverse this edge, and infinity if traversing the edge is infeasible. 
Also, we denote an admissible heuristic weight function with $\widehat{w}: E\to (0,\infty),$ which assigns to an edge a non-overestimating positive real number such that $\widehat{w}(e)\leq w(e)$ for all $e\in E.$ 
We assume that evaluating the true weight $w$ is computationally expensive, but the heuristic edge $\widehat{w}$-value is easy to compute.
Let $E_\mathrm{eval} \subseteq E$ be the set of all evaluated edges, that is, all edges whose $w$-values have been computed.
We introduce a lazy weight function $\overline{w} : E \to (0,\infty]$ which assigns to an edge its admissible heuristic weight $\overline{w}$ before the evaluation and its true weight $w$ after the evaluation, i.e.,
\begin{equation}
	\overline{w}(e) \defeq \begin{cases}
		w(e), & \mathrm{if}\; e\in E_\mathrm{eval}, \\
		\widehat{w}(e), & \mathrm{otherwise.}
	\end{cases}
\end{equation}

\subsection{Optimal Path}

Define a path $\pi = (v_1, v_2 ,\ldots, v_m)$ on the graph $G=(V,E)$ as an ordered set of distinct vertices $v_i \in V$, $i = 1,\ldots,m$ such that, for any two consecutive vertices $v_i, v_{i+1}$, there exists an edge $e = (v_i, v_{i+1}) \in E.$ 
Throughout this paper, we will interchangeably denote a path as the sequence of such edges. 
With some abuse of notation, we denote the cost of a path as $w(\pi) \defeq \sum_{e\in \pi} w(e)$. 
Likewise, we denote $\overline{w}(\pi) \defeq \sum_{e\in \pi} \overline{w}(e)$ for the lazy cost estimate of the path $\pi$.
Let $v_\mathrm{s}, v_\mathrm{g} \in V$ be the start and goal vertices, respectively. 
Let $\Pi$ be the set of all paths from $v_\mathrm{s}$ to $v_\mathrm{g}$ in $G.$ 
Then, the shortest path planning problem seeks to find 
\begin{equation}
	\pi^* \defeq \argmin_{\pi \in \Pi}w(\pi).
\end{equation}

\subsection{Lazy LPA* Search Tree}
We maintain a lazy LPA* search tree to update the inconsistencies that arise from both graph changes and edge value discrepancies between the heuristic weight and the actual weight. 
The lazy LPA* search tree is identical to the standard LPA* search tree~\cite{Koenig2004}, except that lazy LPA* uses the lazy weight function $\overline{w}$ instead of the actual weight function $w$. 
For completeness of discussion, next we define the variables of the lazy LPA*. 

For each vertex, we store the two cost-to-come values, namely, the $g$-value and $rhs$-value to identify the inconsistent vertices, similarly to LPA*.
A vertex $v$ whose $g(v)=rhs(v)$ is called consistent, otherwise it is called inconsistent. An inconsistent vertex is locally overconsistent if $g(v)>rhs(v)$ and locally underconsistent if $g(v)<rhs(v).$
The $g$-value is the accumulated cost-to-come by traversing the previous search tree, whereas the $rhs$-value is the cost-to-come based on the $g$-value of the predecessor and the current $\overline{w}$-value of the current edge. 
Hence, the $rhs$-value is potentially better informed than the $g$-value, and it is defined as follows:
\begin{equation}
	rhs(v) \defeq 
	\begin{cases}
		0, & \mathrm{if}\; v=v_\mathrm{s}, \\
		\min_{u\in pred(v)} (g(u)+\overline{w}(u,v)), &
		\mathrm{otherwise.}
	\end{cases}
\end{equation}
Additionally, the $rhs$-value minimizing the predecessor of $v$ is stored as a backpointer, denoted with 
\begin{equation}
	bp(v)\defeq \argmin_{u\in pred(v)}(g(u)+\overline{w}(u,v)).
\end{equation}
Hence, the subpath from $v_\mathrm{s}$ to $v$ is retrieved by following the backpointers from $v$ to $v_\mathrm{s}$. 

The queue $Q$ prioritizes the inconsistent vertices using the key 
\begin{equation}
	k(v)=[\min (g(v),rhs(v))+h(v) \; ; \min (g(v),rhs(v))], 
\end{equation}
with lexicographic ordering, where $h(v)$ is a consistent heuristic cost-to-go from $v$ to $v_\mathrm{g}$.

\section{Lifelong-GLS Algorithm}

The proposed algorithm, Lifelong-GLS (L-GLS), consists of two loops: the inner loop and the outer loop. 
The inner loop is the main search loop which guarantees to return the shortest path in the current graph upon termination. 
The outer loop updates the current graph heuristically to reflect any external graph changes.
The edge evaluations in the inner loop may induce internal changes to the graph. Both external and internal changes are efficiently repaired by a lazy LPA* search tree. 

In the inner loop, the lazy LPA* search tree updates the new shortest path from $v_\mathrm{s}$ toward $v_\mathrm{g}$ in the current graph $G$ based on the previous search results. 
The lazily evaluated LPA* search tree uses the lazy estimates of the edge values when it propagates the inconsistencies to find the shortest subpath to the goal in the current graph. 
The first unevaluated edge on the shortest subpath returned by the lazy LPA* is then evaluated. 
If the evaluation results in inconsistency, then the lazy LPA* search tree is updated and returns the next best subpath for evaluation. 
If all the edges on the current shortest path to the goal returned by the lazy LPA* are already evaluated, then L-GLS has found the optimal solution and exits the inner loop.

In the outer loop, L-GLS waits for graph changes. When the edges of $G$ change, L-GLS assigns admissible heuristic values to the corresponding edges instead of evaluating them, to make sure that the lazy estimate of the path cost does not overestimate the optimal path cost.
Then, the inner loop begins again to search for the new optimal path. 
Hence, only a subset of the changed edges that could be on the shortest path in the current graph are actually evaluated. 

\subsection{Details of the Algorithm and Main Procedures}

\begin{algorithm}
	\caption{\textsf{Lifelong-GLS}($G, v_\mathrm{s}, v_\mathrm{g}$)}\label{lgls:a:lgls}
	\begin{algorithmic}[1]
		\Procedure{CalculateKey}{$v$} \Return
			\State {[$\min (g(v),rhs(v))+h(v) \; ; \min (g(v),rhs(v))$];}
		\EndProcedure
		\Procedure{UpdateVertex}{$v$}
			\If {$v\neq v_\mathrm{s}$}
				\State $bp(v) = \argmin_{u\in pred(v)} (g(u)+\overline{w}(u,v))$;
				\State $rhs(v) = g(bp(v)) + \overline{w}(bp(v),v)$;
			\EndIf
			\If {$v\in Q$} 
				$Q.\textsc{Remove}(v)$;
			\EndIf
			\If {$g(v)\neq rhs(v)$} 
				\State $Q.\textsc{Insert}((v,\textsc{CalculateKey}(v)))$; 
			\EndIf
		\EndProcedure
		\Procedure{ComputeShortestPath}{$\textsc{Event}$}
			\While{$Q.\textsc{TopKey} \prec \textsc{CalculateKey}(v_\mathrm{g})$ \Or \\
				$g(v_\mathrm{g})\neq rhs(v_\mathrm{g})$}
				\State $u=Q.\textsc{Pop}()$;\label{lgls:algo:line:vertexexpansion}
				\If {$g(u) > rhs(u)$}
					\State $g(u)=rhs(u)$; 
					\If {$\textsc{Event}(u)$ is triggered} \label{lgls:algo:line:expansionevent}
						\State \Return path from $v_\mathrm{s}$ to u;
					\EndIf
					\For {\textbf{all }$v\in succ(u)$}
						\textsc{UpdateVertex}($v$);
					\EndFor
				\Else
					\State $g(u)=\infty$;
					\For {\textbf{all }$v\in succ(u) \cup \set{u}$}
						\State \textsc{UpdateVertex}($v$);
					\EndFor
				\EndIf 
			\EndWhile
		\EndProcedure
		\Procedure{EvaluateEdges}{$\overline{\pi}$}
		\For{\textbf{each} $ e \in \overline{\pi}$}
		\If{$e \notin E_\mathrm{eval}$} 
		\State $\overline{w}(e) \gets w(e)$; 
		\State $E_\mathrm{eval}\gets E_\mathrm{eval} \cup \set{e}$;
		\If {$\overline{w}(e) \neq \widehat{w}(e) $} \Return $e$; 
		\EndIf
		\EndIf
		\EndFor 
		\EndProcedure
		\Procedure{Main}{$ $}
		\For {$\textbf{all } e \in E$} $\overline{w}(e) \gets \widehat{w}(e)$; 
		\EndFor
		\State $E_\mathrm{eval}\gets \varnothing$
		\State $rhs(v_\mathrm{s})=0$;
		\State $\textsc{UpdateVertex}(v_\mathrm{s})$;
		\While{true}
		\Repeat \label{lgls:algo:line:mainsearchbegin}
		\State $\overline{\pi} \gets \textsc{ComputeShortestPath}(\textsc{Event})$;
		\State $(u,v)\gets \textsc{EvaluateEdges}(\overline{\pi})$; \label{lgls:algo:line:evaluateedge}
		\State $\textsc{UpdateVertex}(v)$;
		\Until{$v_\mathrm{g} \in \overline{\pi}$ \textbf{and} $\overline{\pi} \subseteq E_\mathrm{eval}$} 
		\label{lgls:algo:line:mainsearchend}
		\State Wait for changes in $E$;
		\State $L \gets $the set of edges that changed;
		\For {$\textbf{all } e=(u,v) \in L$}
		\State $\overline{w}(e) \gets \widehat{w}(e)$; 
		\State $E_\mathrm{eval} \gets E_\mathrm{eval}\backslash \set{e}$;
		\State $\textsc{UpdateVertex}(v)$;
		\EndFor
		\EndWhile
		\EndProcedure
	\end{algorithmic}
\end{algorithm}

\begin{algorithm}
	\caption{Candidate $\textsc{Event}$ Definitions~\cite{Mandalika2019}}\label{lgls:a:events}
	\begin{algorithmic}[1]
		\Procedure{ShortestPath}{$v$}
		\If{$v=v_\mathrm{g}$} \Return true; \EndIf
		\EndProcedure
		\Procedure{ConstantDepth}{$v$, depth $\alpha$}
		\State $\overline{\pi} \gets$ path from $v_\mathrm{s}$ to $v$;
		\State $\alpha_v \gets $ number of unevaluated edges in $\overline{\pi}$;
		\If{$\alpha_v = \alpha $ \textbf{or} $v=v_\mathrm{g}$} \Return true; \EndIf
		\EndProcedure
	\end{algorithmic}
\end{algorithm}

Next, we describe the step-by-step procedure of L-GLS in greater detail. 
Before the first search begins, all $g$-values of the vertices are initialized with $\infty$ similar to the regular LPA*, and all lazy estimates of edge values are assigned with admissible heuristic values.
The first search begins by setting $rhs(v_\mathrm{s})= 0$ and inserting  $v_\mathrm{s}$ in the priority queue $Q$.
In the main search loop (Line~\ref{lgls:algo:line:mainsearchbegin}-\ref{lgls:algo:line:mainsearchend} of Algorithm~\ref{lgls:a:lgls}) the lazy LPA* search tree is grown with \textsc{ComputeShortestPath(Event)} until an \textsc{Event} is triggered by the expansion of a leaf vertex which just became consistent upon this expansion (Line~\ref{lgls:algo:line:expansionevent} of Algorithm~\ref{lgls:a:lgls}).
Then, the subpath to this leaf vertex which triggered the \textsc{Event} is returned for evaluation (Line~\ref{lgls:algo:line:evaluateedge} of Algorithm~\ref{lgls:a:lgls}). 
Then, \textsc{EvaluateEdges} evaluates the unevaluated edges along the subpath and updates the lazy estimates with their true weights. 
If the evaluation of an edge results in a different value than the previous lazy estimate, then \textsc{EvaluateEdges} returns the edge for the lazy LPA* to update this change accordingly by \textsc{UpdateVertex}. 
The inconsistency is propagated by the lazy LPA* again until the next time the \textsc{Event} is triggered. 
If the path to the goal is found, and all the edges along this path are evaluated in the current graph, then the path is indeed the optimal path in the current graph. This procedure repeats again when the graph changes. 

The procedure \textsc{UpdateVertex} is identical to that of the regular LPA*. The only difference is that when $\textsc{UpdateVertex}(v)$ is called, the $rhs$-value of the vertex $v$ is updated based on the lazy estimate of the incident edge values. 
This is done to avoid edge evaluations of the irrelevant incident edges of $v$. 
When a minimizing predecessor is found lazily, then the vertex assigns its backpointer to this predecessor. Finally, the key of this vertex is updated with \textsc{CalculateKey} to be prioritized in the queue $Q.$ 

The choice of an $\textsc{Event}$ function determines the balance between the vertex expansion (Line~\ref{lgls:algo:line:vertexexpansion} of Algorithm~\ref{lgls:a:lgls}) and the edge evaluation (Line~\ref{lgls:algo:line:evaluateedge} of Algorithm~\ref{lgls:a:lgls}), as in the GLS framework. 
For example, if one chooses the \textsc{ShortestPath} as the \textsc{Event}, then the algorithm becomes a version of Lifelong-LazySP~\cite{Dellin2016}. 
That is, the lazy LPA* repairs its inconsistent part of the tree all the way up to the goal, then returns the shortest path to the goal for evaluation. This minimizes the number of edge evaluations of the inner loop.
On the other hand, if one chooses the \textsc{ConstantDepth} of GLS as the \textsc{Event}, then the algorithm becomes a version of Lifelong-LRA*~\cite{Mandalika2018}. The tree repairing (vertex expansion) of the lazy LPA* is reduced, since the inconsistency propagation is restricted not to exceed a certain depth before evaluating the edges.
This comes at the expense of possibly more edge evaluations. Some candidate \textsc{Event} definitions of GLS~\cite{Mandalika2019} are reproduced in Algorithm~\ref{lgls:a:events}.

Note that the lazy LPA* algorithm maintained under the L-GLS algorithm is almost identical to the regular LPA* algorithm except at three points. 
First, the procedure \textsc{UpdateVertex} of L-GLS updates an inconsistent vertex with respect to the lazy estimate of the incident edges instead of the actual value. 
Second, \textsc{ComputeShortestPath} is identical to that of LPA* in the way it expands the inconsistent vertices of the lowest key first, that is, when it expands an inconsistent vertex, it makes an underconsistent vertex overconsistent and an overconsistent vertex consistent. 
The only difference is that when the overconsistent vertices are expanded, the \textsc{Event} checks whether to continue or stop propagating the inconsistency information to the successors. 
Finally, when the graph changes, L-GLS updates the changed edge values with admissible heuristic values lazily instead of evaluating them to find the exact values. 
Hence, the lazy LPA* inherits all the theoretical properties of the regular LPA*, albeit with respect to a different weight, namely $\overline{w}$ rather than $w$. 
This becomes useful in our analysis that proves the correctness of the algorithm. 

The L-GLS algorithm is different from the GLS algorithm in the following points. 
First, L-GLS stores the previous search results to propagate any inconsistencies efficiently in dynamic graphs, whereas GLS is explicitly designed for a shortest path planning problem in a static graph.\footnote{Although GLS may use an LPA* search tree in a similar way, the purpose of the LPA* search tree is to repair the search tree from the interior changes that come from revealing obstacles by the edge evaluations, rather than the exterior changes resulting from the environmental changes.}
L-GLS can possibly evaluate a much fewer number of edges compared to the GLS from scratch, since the search tree of L-GLS is better informed than that of GLS. 
Second, the exact values for all feasible edges are known a priori in the GLS framework, that is, the heuristic estimates of all feasible edges are accurate. The edge evaluation only reveals a binary trait of the edge, that is, whether the edge is feasible or not, rather than its exact cost. 
This is relaxed in L-GLS, such that the edge costs can vary upon the evaluation. This relaxation is important in problem domains where obtaining an accurate heuristic edge cost may be difficult.  
As long as the heuristic edge cost does not overestimate the actual edge cost, L-GLS finds the optimal solution in the current graph.

\section{Analysis}

We now present some of the properties of L-GLS to provide insights how it works. We also prove the completeness and correctness of the algorithm, based on the inherited properties from both the LPA* and the GLS algorithms. First, let us state two facts that are invariant during the main search loop.

\begin{restatable}{invariant}{LGLSLazyestimateInvariant}
	\label{lgls:invariant:LGLSLazyestimateInvariant}
	The lazy estimate of an edge never overestimates the true edge value, that is, $\overline{w} \leq w$. 
\end{restatable}
\begin{proof}
	Since $\overline{w}(e) = w(e)$ for all $e\in E_\mathrm{eval},$ and $\overline{w}(e) = \widehat{w}(e) \leq w(e)$ for all $e\notin E_\mathrm{eval},$ it follows that $\overline{w}(e)\leq w(e)$ for all $e\in E.$
\end{proof}

The next invariant shows that when the lazy LPA* returns the shortest subpath to the goal, then this subpath is optimal. 
This follows from the theoretical properties of LPA*, which is similar to A*. 
\begin{restatable}{invariant}{LGLSLPAInvariant}
	\label{lgls:invariant:LGLSLPAInvariant}
	The output subpath $\overline{\pi}$ from $v_\mathrm{s}$ to $v$ of \textsc{ComputeShortestPath}(\textsc{Event}) is optimal with respect to $\overline{w}$, that is,
	$\overline{\pi} =\argmin_{\pi \in \Pi_v} \overline{w}(\pi)$, where $\Pi_v$ is the set of paths from $v_\mathrm{s}$ to $v.$
\end{restatable}
\begin{proof}
	\textsc{ComputeShortestPath} with an $\textsc{Event}$ returns the path $\overline{\pi}$ from $v_\mathrm{s}$ to $v$, when the triggering vertex $v$ is expanded. Right before the expansion, $v$ was locally overconsistent.
	Theorem~6 of LPA* \cite{Koenig2004} states that whenever \textsc{ComputeShortestPath} selects a locally overconsistent vertex for expansion, then the g-value of $v$ is optimal with respect to $\overline{w}$.
\end{proof}

Now we show the completeness and correctness of the inner loop of L-GLS. The first theorem is due to the completeness of GLS~\cite{Mandalika2019}, which we restate here. 
\begin{restatable}{theorem}{LGLScomplete}
	\label{lgls:invariant:LGLScomplete}
	Let \textsc{Event} be a function that on halting ensures there is at least one unevaluated edge on the current shortest path or that the goal is reached. 
	Then, the inner loop (Line~\ref{lgls:algo:line:mainsearchbegin}-\ref{lgls:algo:line:mainsearchend}) of L-GLS implemented with \textsc{Event} on a finite graph terminates.
\end{restatable}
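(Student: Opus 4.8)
The plan is to prove that the \textsc{Repeat}--\textsc{Until} loop at Lines~\ref{lgls:algo:line:mainsearchbegin}--\ref{lgls:algo:line:mainsearchend} both makes finitely many iterations and that every single iteration halts. For the per-iteration claim, I would first observe that \textsc{ComputeShortestPath}(\textsc{Event}) terminates on a finite graph: the lazy weights $\overline{w}$ are not changed inside a call (edge evaluations happen afterwards, in \textsc{Main}), so that call is literally LPA*'s \textsc{ComputeShortestPath} run with the weight $\overline{w}$, and therefore --- as already noted in the paragraph preceding this analysis --- it inherits LPA*'s guarantee that each vertex is expanded at most twice; hence its inner \textsc{While} loop makes at most $O(|V|)$ iterations. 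The subsequent calls to \textsc{EvaluateEdges} and \textsc{UpdateVertex} obviously halt, so each pass through the \textsc{Repeat} body is finite.

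For the number of iterations, I would track $|E_\mathrm{eval}|$. During one execution of the inner loop $E_\mathrm{eval}$ only grows: edges are deleted from $E_\mathrm{eval}$ only in the outer ``Wait for changes'' block, and \textsc{EvaluateEdges} inserts an edge only after testing $e\notin E_\mathrm{eval}$, so no edge is evaluated twice. Thus $|E_\mathrm{eval}|$ is nondecreasing over iterations and bounded above by $|E|<\infty$. I would then show that any iteration that fails the \textsc{Until} condition strictly increases $|E_\mathrm{eval}|$. Negating the condition, either $\overline{\pi}\not\subseteq E_\mathrm{eval}$ --- in which case $\overline{\pi}$ has an unevaluated edge and \textsc{EvaluateEdges} evaluates it --- or $v_\mathrm{g}\notin\overline{\pi}$; in the latter case the triggering vertex $v$ is not $v_\mathrm{g}$, so the hypothesis on \textsc{Event} (on halting there is an unevaluated edge on the returned subpath, or the goal is reached) forces $\overline{\pi}$ to contain an unevaluated edge, which \textsc{EvaluateEdges} again evaluates. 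Either way $|E_\mathrm{eval}|$ increases by at least one. Since this can happen at most $|E|$ times, only finitely many iterations are non-terminating, and the loop halts.

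The only genuinely delicate point --- and the step I would write out most carefully --- is the case analysis establishing that a non-terminating iteration must evaluate a fresh edge; this is precisely where the hypothesis on \textsc{Event} is used, since without it \textsc{ComputeShortestPath} could in principle return a fully evaluated subpath ending at a non-goal vertex, \textsc{EvaluateEdges} would do nothing, $E_\mathrm{eval}$ would be unchanged, and the loop could stall. Modulo the lazy-weight bookkeeping, this is exactly the GLS termination argument of~\cite{Mandalika2019}; the new ingredients here are only the remarks that modifications to the lazy LPA* tree leave $|E_\mathrm{eval}|$ monotone and do not affect the per-call termination of \textsc{ComputeShortestPath}.
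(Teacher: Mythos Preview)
Your proof is correct and follows essentially the same route as the paper's: both argue that each non-terminating iteration must evaluate at least one fresh edge (this is where the \textsc{Event} hypothesis is used), and since $|E|$ is finite the loop halts. Your version is more thorough---you explicitly verify per-iteration termination of \textsc{ComputeShortestPath} via the LPA* twice-expansion bound and spell out the case analysis on the negated \textsc{Until} condition---whereas the paper compresses everything into two sentences and leaves per-call termination implicit.
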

\begin{proof}
	Suppose the path to the goal has not been evaluated, such that \textsc{ComputeShortestPath(Event)} returns at least one unevaluated edge to evaluate. Since there is a finite number of edges, the inner loop will eventually terminate.
\end{proof}

\begin{restatable}{theorem}{LGLScorrect}
	\label{lgls:theorem:LGLScorrect}
	L-GLS finds the shortest path with respect to the current graph when the inner loop (Line~\ref{lgls:algo:line:mainsearchbegin}-\ref{lgls:algo:line:mainsearchend}) terminates.
\end{restatable}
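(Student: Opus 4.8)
The plan is to read off the result from the two invariants already in hand, together with the exit condition of the inner loop. Write $\overline{\pi}$ for the subpath returned by the \emph{last} call to \textsc{ComputeShortestPath} before the Repeat--Until loop (Line~\ref{lgls:algo:line:mainsearchbegin}--\ref{lgls:algo:line:mainsearchend}) exits, and let $\overline{\pi}' \subseteq \overline{\pi}$ be its prefix running from $v_\mathrm{s}$ to $v_\mathrm{g}$; this prefix exists because the exit test forces $v_\mathrm{g}\in\overline{\pi}$. The exit test also forces $\overline{\pi}\subseteq E_\mathrm{eval}$, so every edge of $\overline{\pi}'$ has been evaluated, whence $\overline{w}(e)=w(e)$ for each $e\in\overline{\pi}'$ and therefore $\overline{w}(\overline{\pi}')=w(\overline{\pi}')$. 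It remains to show $\overline{\pi}'$ is no longer, under $w$, than any other $v_\mathrm{s}$--$v_\mathrm{g}$ path.

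First I would establish that $\overline{\pi}'$ is $\overline{w}$-optimal over $\Pi=\Pi_{v_\mathrm{g}}$. By Invariant~\ref{lgls:invariant:LGLSLPAInvariant}, the subpath $\overline{\pi}$ returned by \textsc{ComputeShortestPath} is a minimizer of $\overline{w}$ among all $v_\mathrm{s}$-to-(triggering vertex) paths; if the triggering vertex is $v_\mathrm{g}$ itself then $\overline{\pi}=\overline{\pi}'$ and we are done, and if it is an interior vertex whose backpointer path passes through $v_\mathrm{g}$ then the $v_\mathrm{s}$--$v_\mathrm{g}$ prefix $\overline{\pi}'$ inherits $\overline{w}$-optimality by the principle of optimality (valid since $\overline{w}>0$ edgewise). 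Moreover, this optimality is not disturbed by the final \textsc{EvaluateEdges} call: since $\overline{\pi}\subseteq E_\mathrm{eval}$ already, that call modifies no lazy weight, so $\overline{\pi}'$ is still $\overline{w}$-optimal at the instant the inner loop terminates.

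Next I would close with the chain of inequalities: for an arbitrary $\pi\in\Pi$,
\[ w(\pi)\;\ge\;\overline{w}(\pi)\;\ge\;\overline{w}(\overline{\pi}')\;=\;w(\overline{\pi}'), \]
where the first inequality is Invariant~\ref{lgls:invariant:LGLSLazyestimateInvariant} ($\overline{w}\le w$ on every edge, hence on every path), the second is the $\overline{w}$-optimality of $\overline{\pi}'$ just established, and the equality was shown above. Thus $w(\overline{\pi}')\le w(\pi)$ for every $\pi\in\Pi$, i.e.\ $\overline{\pi}'=\pi^*$ is a shortest path in the current graph, which is the assertion of the theorem. (If $\Pi=\varnothing$ the statement is vacuous, and termination of the loop in that case is already covered by Theorem~\ref{lgls:invariant:LGLScomplete}.)

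The part I expect to require the most care is the first step: justifying that Invariant~\ref{lgls:invariant:LGLSLPAInvariant}, which is phrased in terms of the triggering vertex of a \textsc{ComputeShortestPath} call, may be applied so as to conclude $\overline{w}$-optimality of the backpointer path \emph{to $v_\mathrm{g}$} at precisely the iteration in which the inner loop exits --- including the case where the \textsc{Event} fires at an interior vertex and one must pass to a prefix --- and confirming that nothing (in particular no lazy weight, no key, no backpointer) changes between the return of that \textsc{ComputeShortestPath} call and the exit test. Once that bookkeeping is pinned down, the remainder is the two-line inequality chain above.
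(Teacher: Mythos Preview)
Your proposal is correct and follows essentially the same route as the paper: use Invariant~\ref{lgls:invariant:LGLSLPAInvariant} to obtain $\overline{w}$-optimality of the returned path to $v_\mathrm{g}$, use Invariant~\ref{lgls:invariant:LGLSLazyestimateInvariant} to bound $\overline{w}(\pi^*)\le w(\pi^*)$, and use $\overline{\pi}\subseteq E_\mathrm{eval}$ to collapse $\overline{w}(\overline{\pi})$ to $w(\overline{\pi})$. The paper's proof simply identifies $\overline{\pi}$ with the path to $v_\mathrm{g}$ and writes the inequality chain directly, without your additional bookkeeping about prefixes, the possibility that the \textsc{Event} fires past $v_\mathrm{g}$, or the no-op status of the final \textsc{EvaluateEdges} call; your extra care on these points is sound but not strictly needed for the argument as the paper presents it.
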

\begin{proof}
	Let $\pi^*$ be the optimal path with respect to $w$ in the current graph, that is, $w(\pi^*)=\min_{\pi\in\Pi} w(\pi)$, where $\Pi$ is the set of all paths from $v_\mathrm{s}$ to $v_\mathrm{g}$. L-GLS terminates its inner-loop when $v_g \in \overline{\pi}$ and $\overline{\pi}\subseteq E_\mathrm{eval},$ where $\overline{\pi}$ is the output subpath of \textsc{ComputeShortestPath(Event)}.
	Then, we have 
	\begin{equation}
		\overline{w}(\overline{\pi}) = \sum_{e\in\overline{\pi}} \overline{w}(e)
		\leq \sum_{e'\in\pi^*} \overline{w}(e') 
		\leq \sum_{e'\in\pi^*} w(e') 
		= w(\pi^*),
	\end{equation}
	where the first inequality holds by Invariant~\ref{lgls:invariant:LGLSLPAInvariant}, and the second inequality follows by Invariant~\ref{lgls:invariant:LGLSLazyestimateInvariant}. 
	Hence, $\overline{w}(\overline{\pi}) \leq w(\pi^*),$ and since $\overline{\pi} \subseteq E_\mathrm{eval},$ we have $w(\overline{\pi}) = \overline{w}(\overline{\pi}) \leq w(\pi^*).$
	But $w(\pi^*) \leq w(\overline{\pi})$, since $\pi^*$ is the optimal path.
	Therefore, $\overline{\pi}$ must be the optimal path with respect to $w.$
\end{proof}


Note that the results of Theorem~\ref{lgls:theorem:LGLScorrect} can be extended to find a bounded suboptimal path if an inflated heuristic weight function is used instead of an admissible heuristic weight function.
In addition, if the output subpath of \textsc{ComputeShortestPath} is no longer than the optimal subpath by some factor, then the solution obtained by L-GLS is no longer than the optimal solution by the same factor. 
This is formalized in Theorem~\ref{lgls:theorem:LGLSbounded} below.
\begin{restatable}{theorem}{LGLSbounded}
	\label{lgls:theorem:LGLSbounded}
	Assume $\widehat{w}\leq \varepsilon_1 w$, and assume that the output subpath $\overline{\pi}$ of \textsc{ComputeShortestPath} from $v_\mathrm{s}$ to $v$ satisfies $\overline{w}(\overline{\pi}) \leq \varepsilon_2 \min_{\pi \in \Pi_v}\overline{w}(\pi)$, where $\varepsilon_1, \varepsilon_2 \geq 1$. Then, when the inner loop (Line~\ref{lgls:algo:line:mainsearchbegin}-\ref{lgls:algo:line:mainsearchend}) terminates,
	L-GLS finds a bounded suboptimal path $\pi$ from $v_\mathrm{s}$ to $v_\mathrm{g}$ such that $w(\pi) \leq \varepsilon_1 \varepsilon_2 w(\pi^*),$
	where $\pi^*$ is the optimal path in the current graph.
\end{restatable}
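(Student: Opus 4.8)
The plan is to follow the proof of Theorem~\ref{lgls:theorem:LGLScorrect} almost verbatim, replacing each of the two ingredients it uses by an $\varepsilon$-relaxed version and then letting the two relaxation factors combine multiplicatively. Concretely, Invariant~\ref{lgls:invariant:LGLSLazyestimateInvariant} ($\overline{w}\le w$) will be replaced by the weaker bound $\overline{w}\le\varepsilon_1 w$, and the exact optimality of the returned subpath (Invariant~\ref{lgls:invariant:LGLSLPAInvariant}) will be replaced by the $\varepsilon_2$-approximate optimality assumed in the hypothesis, namely $\overline{w}(\overline{\pi})\le\varepsilon_2\min_{\pi\in\Pi_v}\overline{w}(\pi)$. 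Note that, unlike in Theorem~\ref{lgls:theorem:LGLScorrect}, we neither need nor can assert that $\overline{\pi}$ is optimal, so the lower bound $w\le\overline{w}$ plays no role here.

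First I would re-establish the analogue of Invariant~\ref{lgls:invariant:LGLSLazyestimateInvariant}: for every $e\in E$, $\overline{w}(e)\le\varepsilon_1 w(e)$. Indeed, if $e\in E_\mathrm{eval}$ then $\overline{w}(e)=w(e)\le\varepsilon_1 w(e)$ because $\varepsilon_1\ge 1$; and if $e\notin E_\mathrm{eval}$ then $\overline{w}(e)=\widehat{w}(e)\le\varepsilon_1 w(e)$ by assumption. Summing over the edges of any path $\pi$ then gives $\overline{w}(\pi)\le\varepsilon_1 w(\pi)$. The hypothesis $\varepsilon_1\ge 1$ is exactly what keeps this bound valid on the already-evaluated edges, where the lazy weight equals the true, un-inflated weight.

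Next I would run the termination argument. When the inner loop halts we have $v_\mathrm{g}\in\overline{\pi}$ and $\overline{\pi}\subseteq E_\mathrm{eval}$, and L-GLS returns the fully evaluated path $\pi=\overline{\pi}$ reaching $v_\mathrm{g}$; since every edge of $\overline{\pi}$ has been evaluated, $w(\pi)=w(\overline{\pi})=\sum_{e\in\overline{\pi}}\overline{w}(e)=\overline{w}(\overline{\pi})$. Because the triggering vertex at termination is $v_\mathrm{g}$, we have $\Pi_v=\Pi_{v_\mathrm{g}}=\Pi$, so $\pi^*$ is an admissible competitor in the minimum and the hypothesis on \textsc{ComputeShortestPath} gives $\overline{w}(\overline{\pi})\le\varepsilon_2\min_{\pi'\in\Pi}\overline{w}(\pi')\le\varepsilon_2\,\overline{w}(\pi^*)$. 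Applying the edgewise bound of the previous step to $\pi^*$ yields $\overline{w}(\pi^*)\le\varepsilon_1 w(\pi^*)$. Chaining the three inequalities, $w(\pi)=\overline{w}(\overline{\pi})\le\varepsilon_2\,\overline{w}(\pi^*)\le\varepsilon_1\varepsilon_2\,w(\pi^*)$, which is the claimed bound.

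Since this is a routine strengthening of Theorem~\ref{lgls:theorem:LGLScorrect}, I do not expect a deep obstacle; the points that require care are (i) invoking $\varepsilon_1\ge 1$ so that the relaxed weight bound survives on the evaluated edges, where inflation of $\widehat{w}$ offers no help and can only hurt; (ii) the fact---already implicit in the proof of Theorem~\ref{lgls:theorem:LGLScorrect}---that at termination the subpath $\overline{\pi}$ genuinely reaches $v_\mathrm{g}$, so that $\Pi_v=\Pi$ and $\pi^*$ may be used inside the minimum; and (iii) keeping clear that the $\varepsilon_2$-bound on \textsc{ComputeShortestPath} is a hypothesis of the theorem---the kind of guarantee one obtains, for instance, from an inflated heuristic cost-to-go combined with early truncation of the inconsistency propagation---rather than something to be derived within this proof.
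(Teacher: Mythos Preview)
Your proposal is correct and follows essentially the same approach as the paper's own proof: establish $\overline{w}(e)\le\varepsilon_1 w(e)$ for all $e$, then chain $\overline{w}(\overline{\pi})\le\varepsilon_2\,\overline{w}(\pi^*)\le\varepsilon_1\varepsilon_2\,w(\pi^*)$ and use $\overline{\pi}\subseteq E_\mathrm{eval}$ to conclude $w(\overline{\pi})=\overline{w}(\overline{\pi})$. Your write-up is in fact a bit more careful than the paper's, since you make explicit why $\varepsilon_1\ge 1$ is needed on evaluated edges and why $\Pi_v=\Pi$ at termination.
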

\begin{proof}
	Since $\widehat{w}\leq \varepsilon_1 w$, we have $\overline{w}(e) \leq \varepsilon_1 w(e)$ for all $e\in E.$
	Recall that L-GLS terminates its inner-loop when $v_g \in \overline{\pi} \subseteq E_\mathrm{eval}.$
	Then, we have 
	\begin{equation}
		\begin{aligned}
			\overline{w}(\overline{\pi}) = \sum_{e\in\overline{\pi}} \overline{w}(e)
			& \leq \varepsilon_2\sum_{e'\in\pi^*} \overline{w}(e') \\
			& \leq \varepsilon_2\sum_{e'\in\pi^*} \varepsilon_1 w(e') 
			= \varepsilon_1 \varepsilon_2 w(\pi^*).
		\end{aligned}
	\end{equation}
	Hence, $\overline{w}(\overline{\pi}) \leq \varepsilon_1 \varepsilon_2 w(\pi^*)$ and since $\overline{\pi} \subseteq E_\mathrm{eval},$ we have $w(\overline{\pi}) = \overline{w}(\overline{\pi}) \leq \varepsilon_1 \varepsilon_2 w(\pi^*).$ Therefore, the path length of $\overline{\pi}$ must not be greater than the optimal path by a factor $\varepsilon_1 \varepsilon_2.$
\end{proof}

The inflated heuristic weight function biases the search greedily, and often a high inflation factor $\varepsilon_1$ helps finding a solution faster. The truncation factor $\varepsilon_2$ determines how early the inconsistent propagation of LPA* can be terminated, such that an existing path without further rewiring is already guaranteed not to exceed the optimal path by the factor $\varepsilon_2$ in length~\cite{Aine2016}. 
The two factors can be completely decoupled, but they have the same goal. They make the lazy search tree find a good enough solution fast for evaluation, instead of spending time to find the lazily evaluated optimal path, which is likely to be repaired anyways.

\section{Numerical Results}

In this section, we present numerical results comparing L-GLS to LPA* and GLS to demonstrate the efficiency of L-GLS in scenarios where the shortest path planning problem is solved consecutively in a dynamic environment. 
The search is performed on the same graph with evenly distributed vertices, in which two vertices are adjacent if they are within a predefined radius. 
The graph topology does not change throughout the experiment, and only the edge values change due to underlying environment changes. 
We present search results of path planning problems in $\R^2$ for the sake of visualization, and then we present search results of piano movers' problems in $\R^3$ and of manipulation problems in $\R^7$ using PR2, a mobile robot with 7D arms. 

\begin{figure*}[ht]
	\centering
	\begin{subfigure}{\myMSFigureScale\textwidth}
		\includegraphics[width=\myLineScale\linewidth]{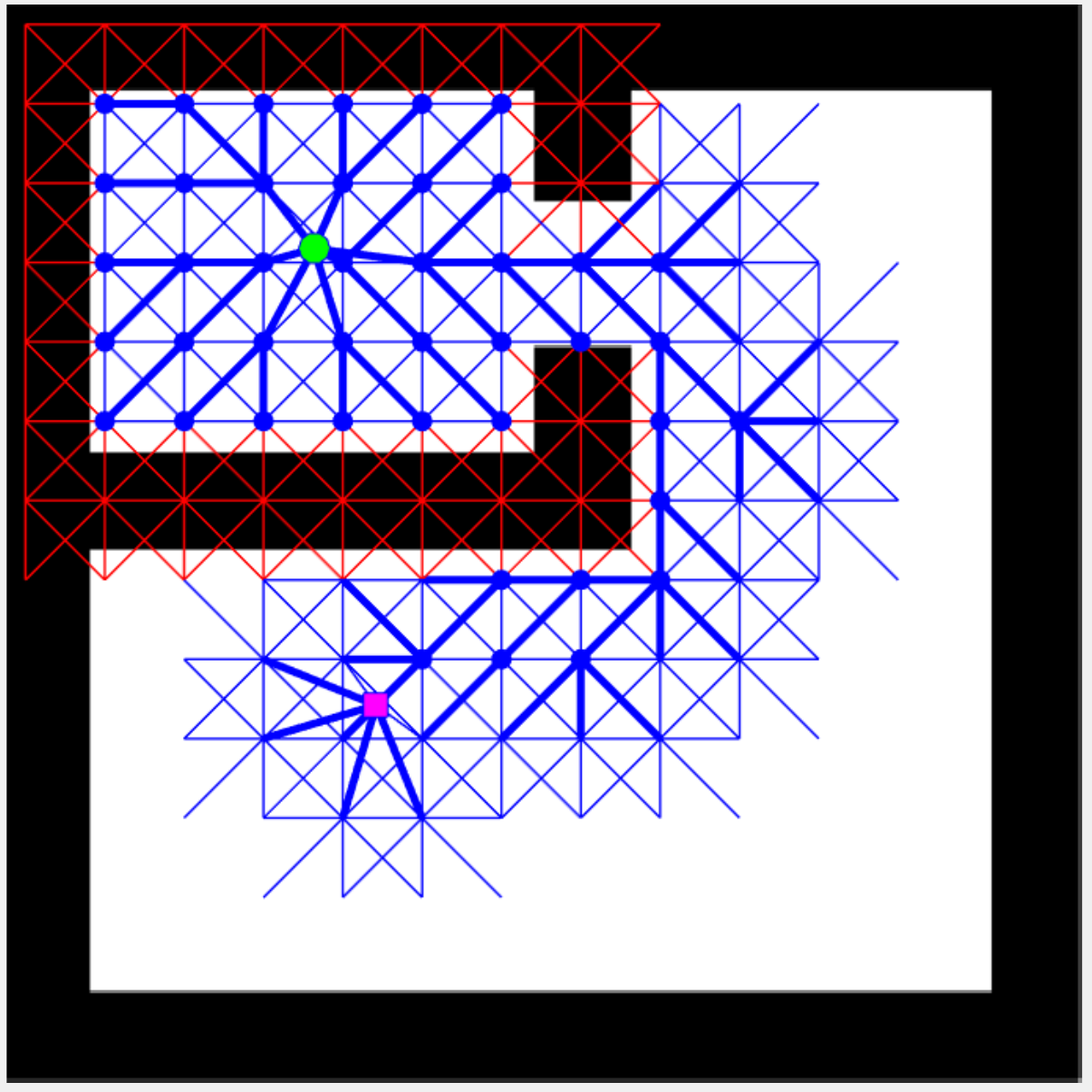}
	\end{subfigure}
	\begin{subfigure}{\myMSFigureScale\textwidth}
		\includegraphics[width=\myLineScale\linewidth]{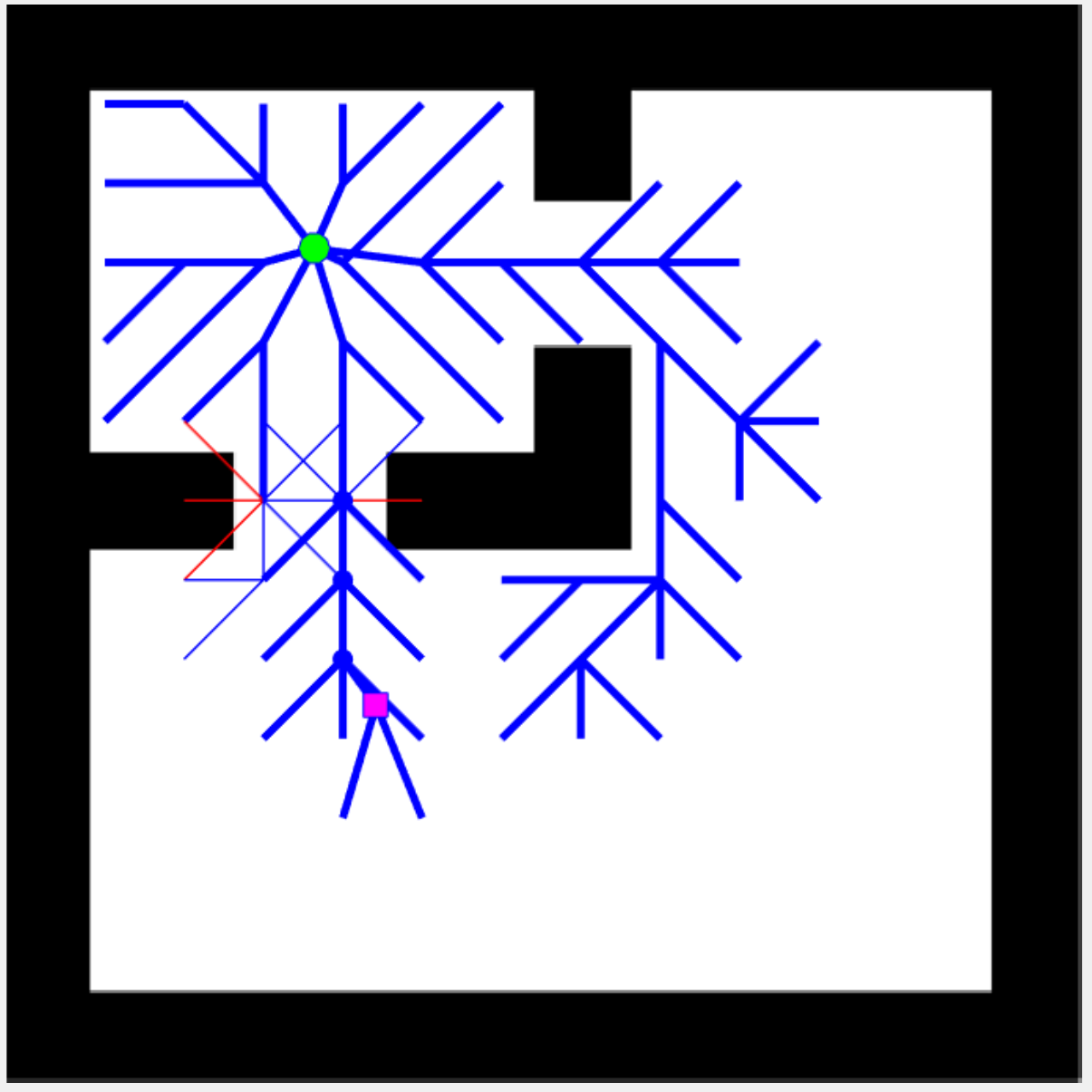}
	\end{subfigure}
	\begin{subfigure}{\myMSFigureScale\textwidth}
		\includegraphics[width=\myLineScale\linewidth]{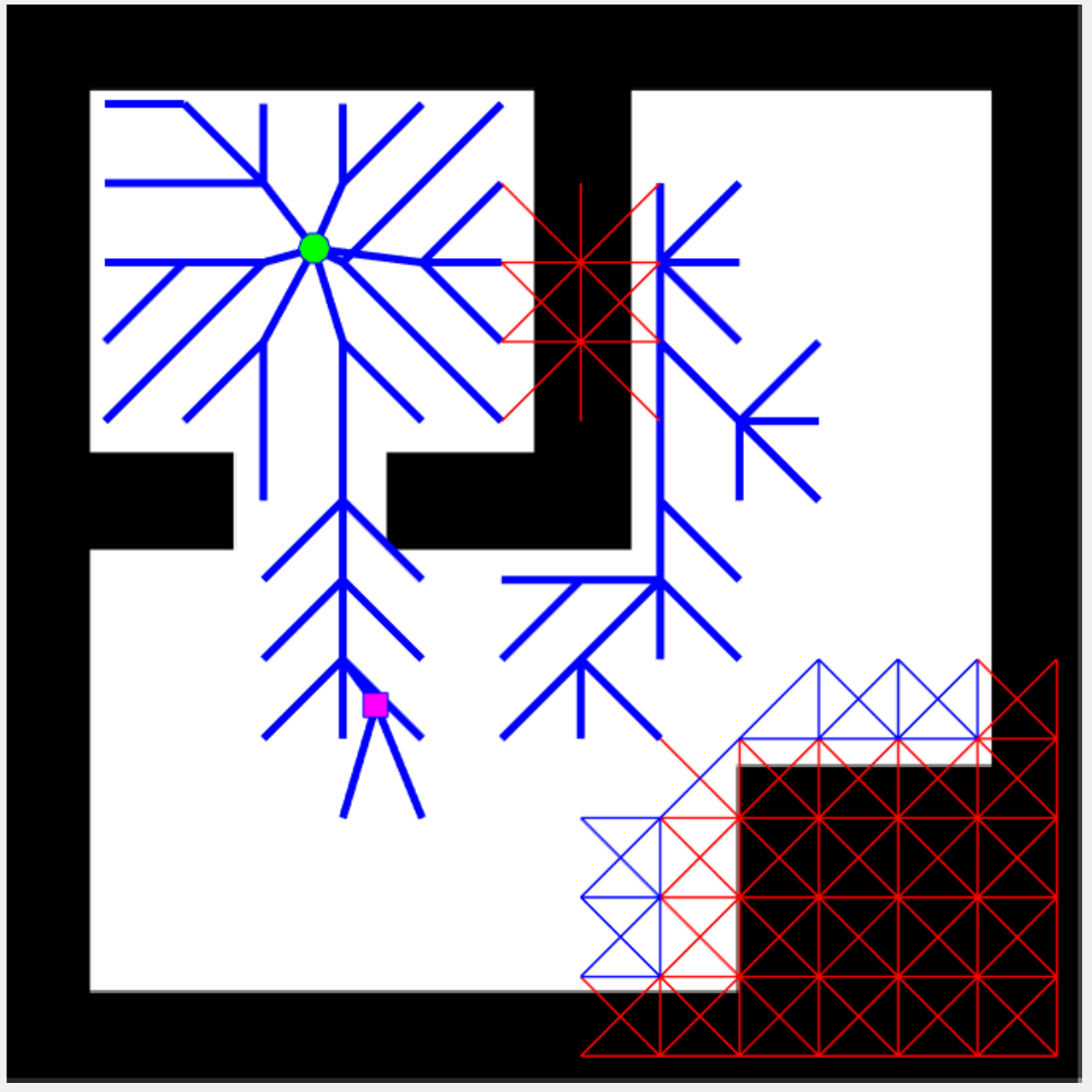}
	\end{subfigure}
	\begin{subfigure}{\myMSFigureScale\textwidth}
		\includegraphics[width=\myLineScale\linewidth]{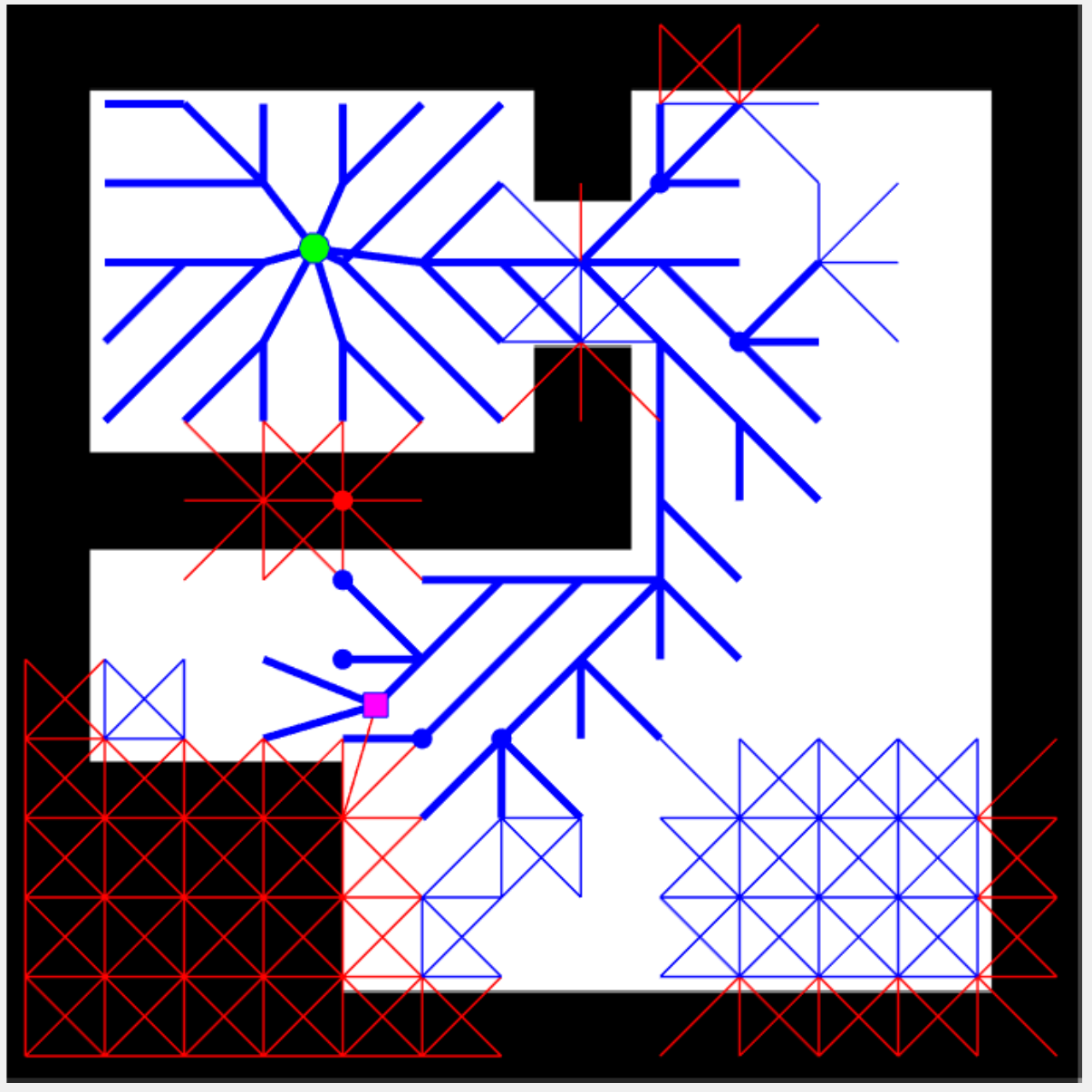}
	\end{subfigure}
	\begin{subfigure}{\myMSFigureScale\textwidth}
		\includegraphics[width=\myLineScale\linewidth]{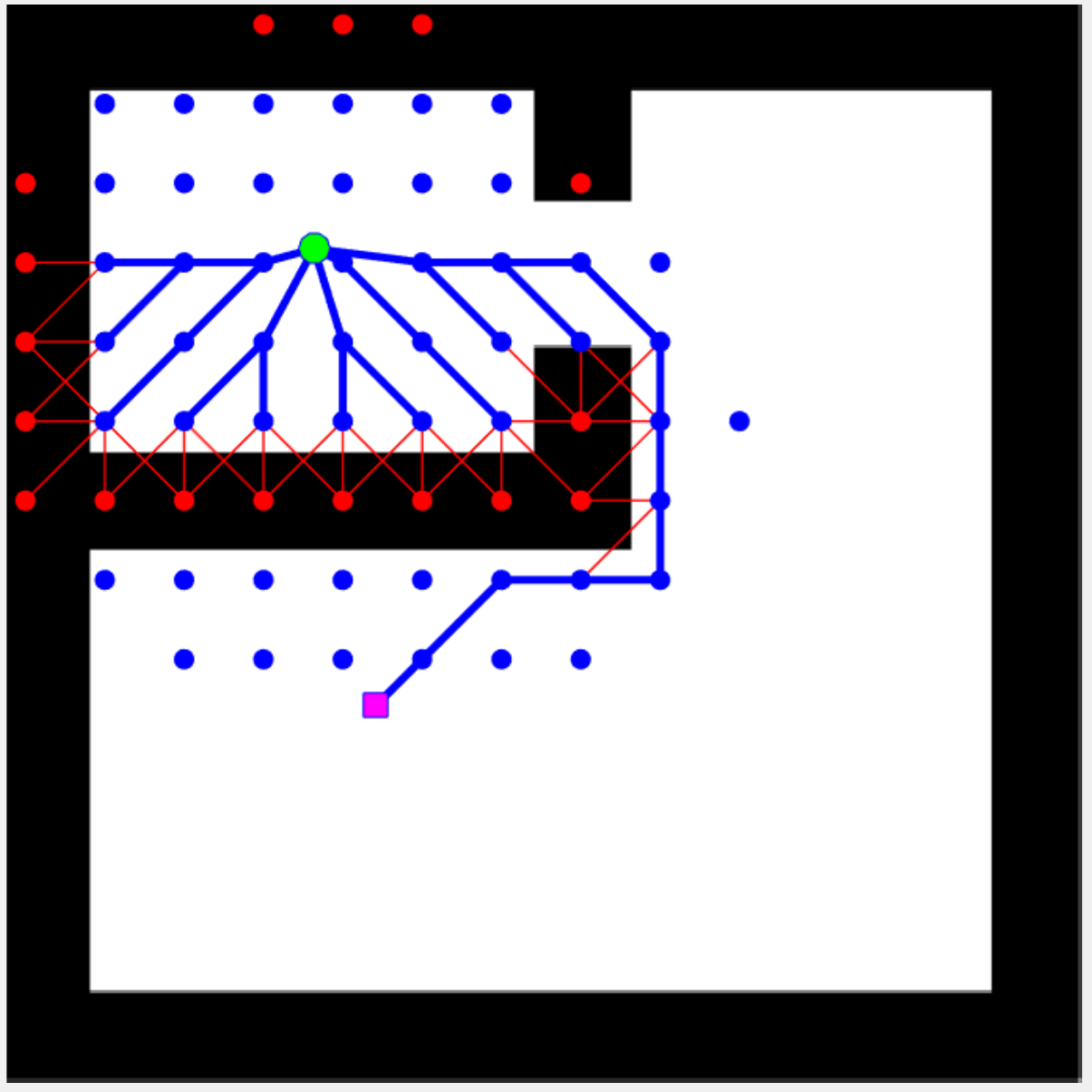}
		\caption{}
	\end{subfigure}
	\begin{subfigure}{\myMSFigureScale\textwidth}
		\includegraphics[width=\myLineScale\linewidth]{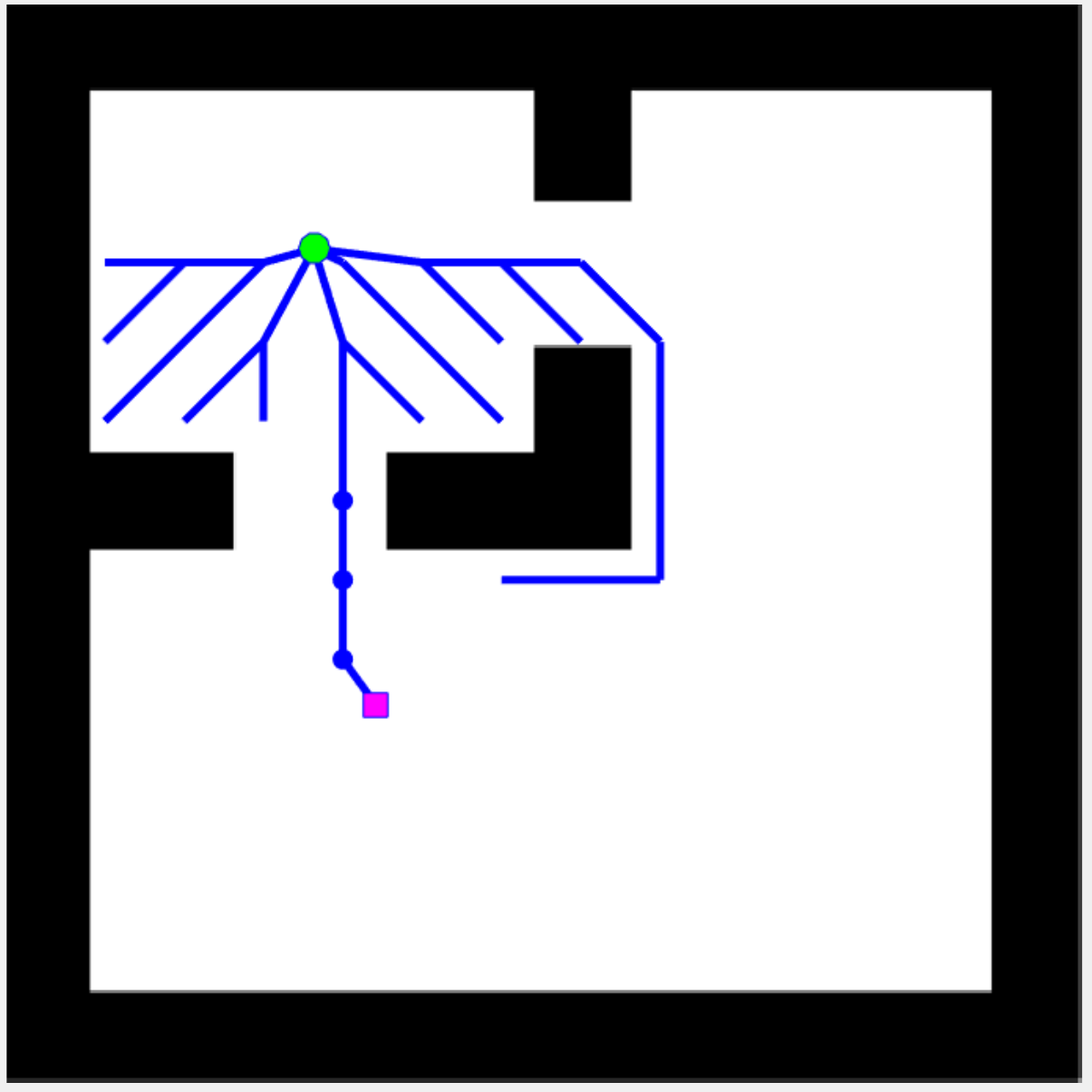}
		\caption{}
	\end{subfigure}
	\begin{subfigure}{\myMSFigureScale\textwidth}
		\includegraphics[width=\myLineScale\linewidth]{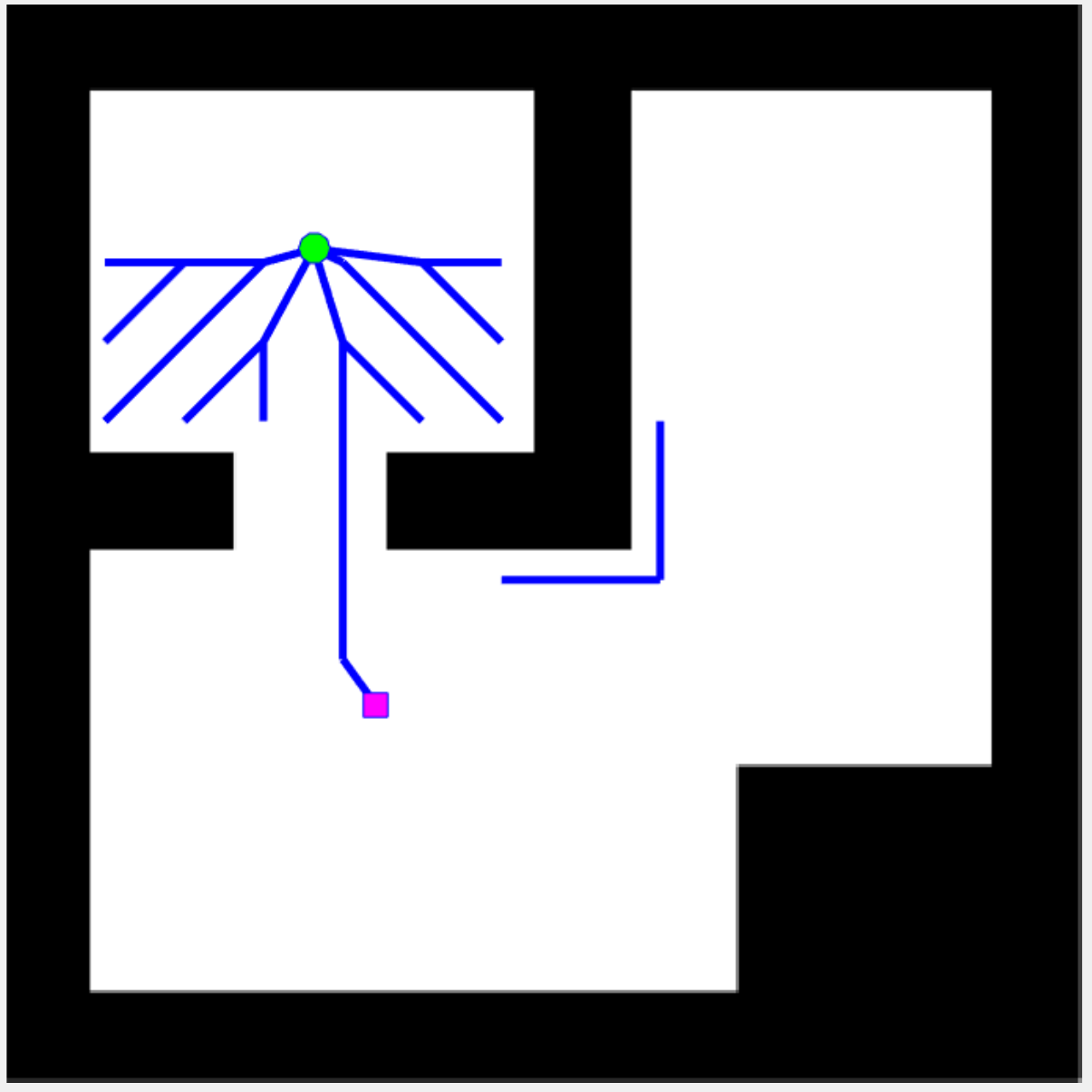}
		\caption{}
	\end{subfigure}
	\begin{subfigure}{\myMSFigureScale\textwidth}
		\includegraphics[width=\myLineScale\linewidth]{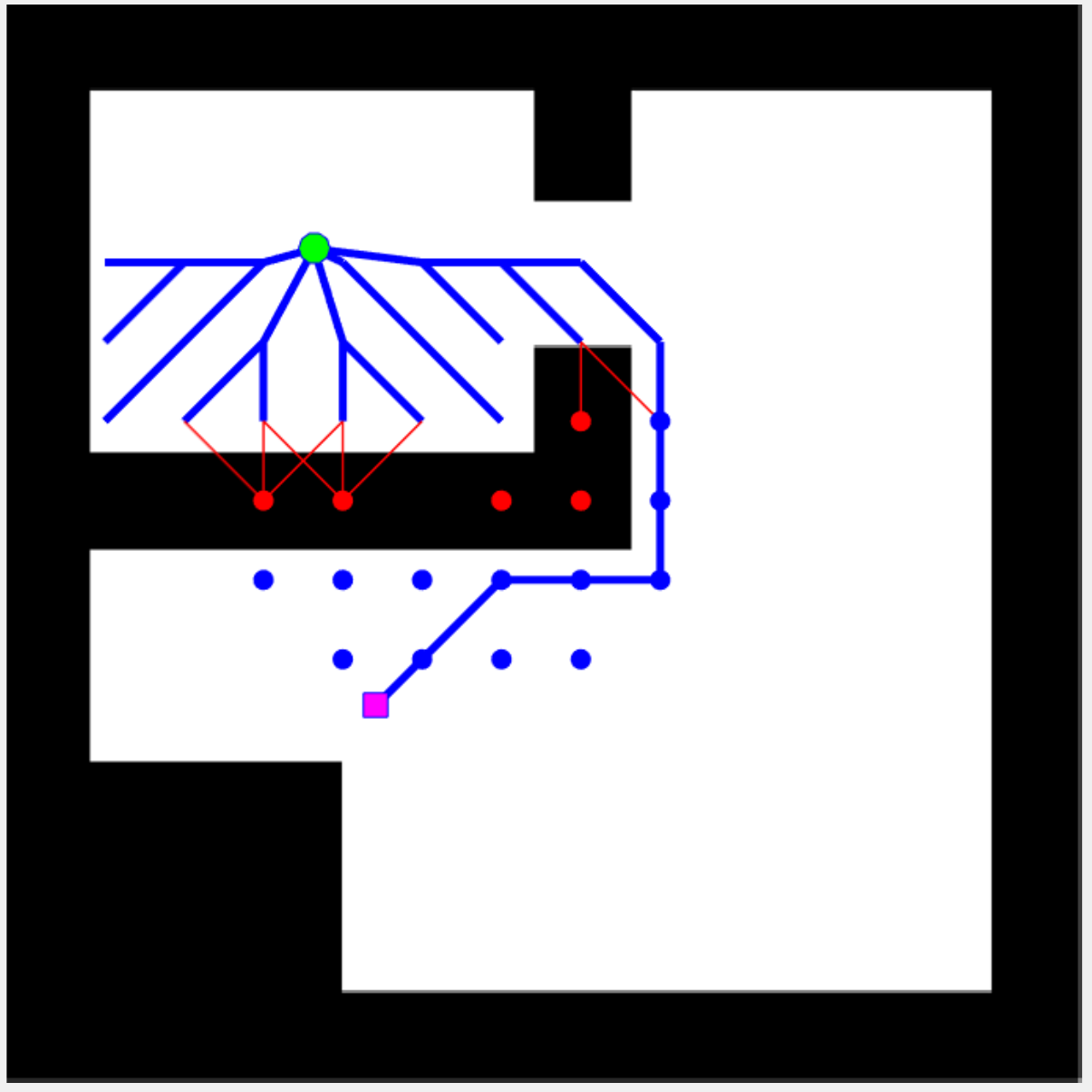}
		\caption{}
	\end{subfigure}
	\caption{LPA*(top row) and Lifelong-LazySP(bottom row) search results to find the shortest path from start vertex(\tikzcircle[blue, fill=green]{2.5pt}) to goal vertex(\tikzsquare[blue, fill=magenta]{4.5pt}) per environment change, from left to right: (a) first search, (b) second search, (c) third search, and (d) final search. 
		Lines(\tikzline[blue,semithick]{}\tikzline[red,semithick]{}) are the evaluated edges, and dots(\tikzcircle[blue, fill=blue]{2.0pt}\,\tikzcircle[red, fill=red]{2.0pt}) are the expanded vertices during the current search. Bold lines(\tikzline[blue,very thick]{}) are the edges belonging to the current search tree. Blue and red represents free and obstacle, respectively.}
	\label{lgls:f:result_example}
\end{figure*}

During the 2D experiments, the environment changed three times after the shortest path was found in each of the changed environments (see Figure~\ref{lgls:f:result_example}). 
We recorded the number of vertex expansions and the number of edge evaluations in each search episode for the three algorithms: LPA*, L-GLS, and GLS for each search (see Figure~\ref{lgls:f:result2d}).
We chose \textsc{ShortestPath} for the \textsc{Event} function for both L-GLS and GLS. 
Hence, L-GLS and GLS were equivalent to Lifelong-LazySP and LazySP, respectively.

\begin{figure*}[ht]
	\centering
	\begin{subfigure}{0.45\textwidth}
		\includegraphics[width=\myLineScale\textwidth]{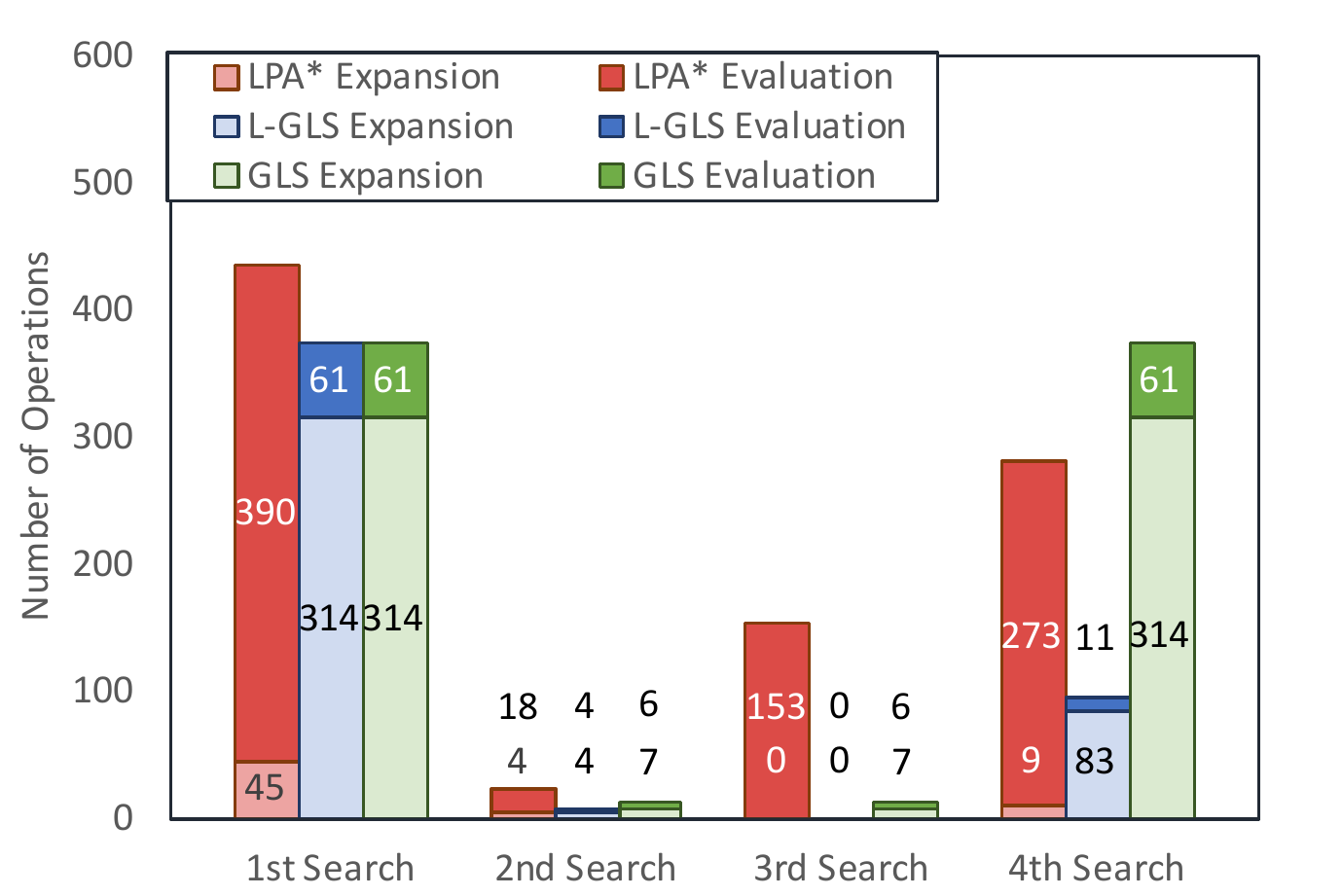}
		\caption{}
	\end{subfigure}
	\begin{subfigure}{0.45\textwidth}
		\includegraphics[width=\myLineScale\linewidth]{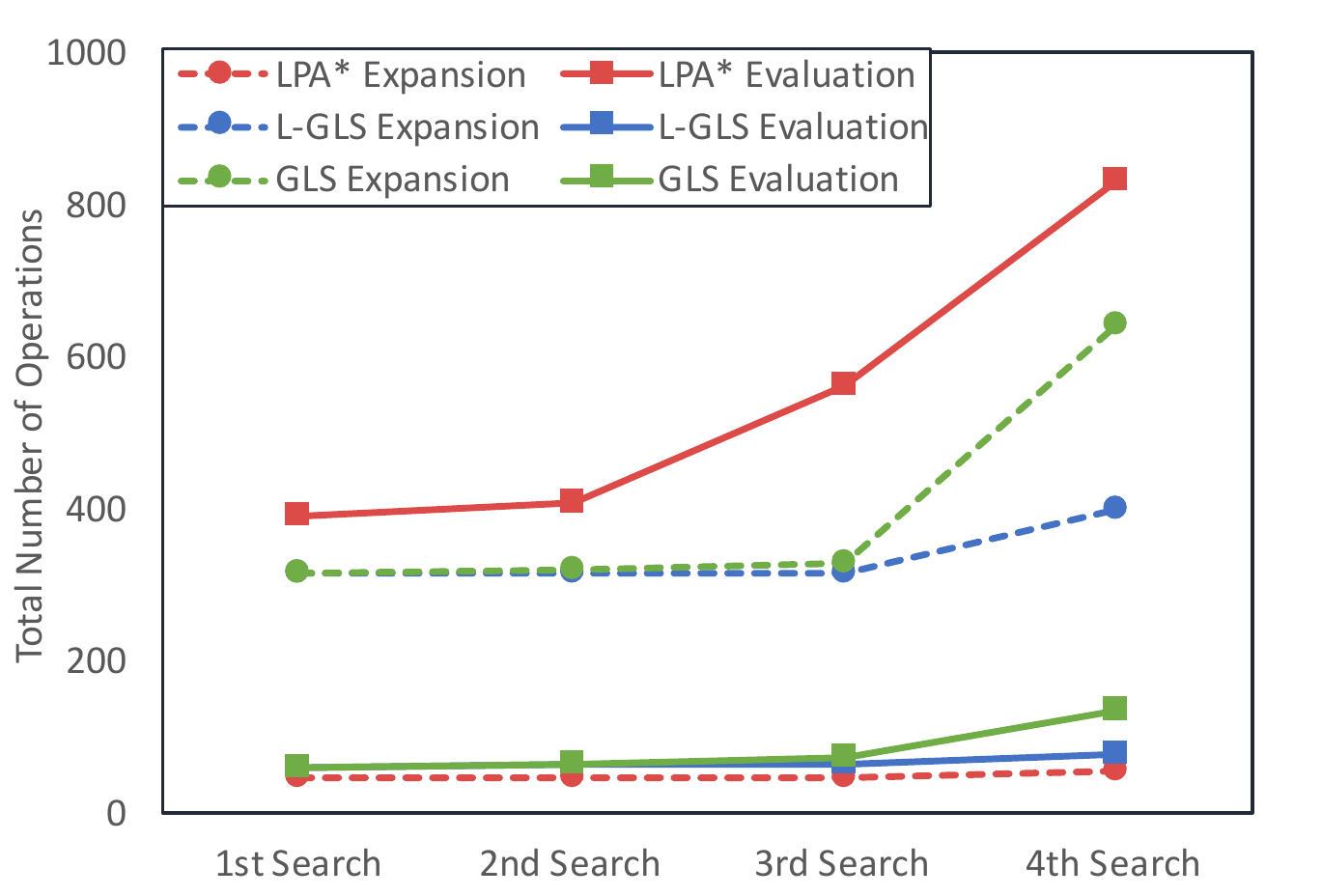}
		\caption{}
	\end{subfigure}\hfill
	\caption{Number of edge evaluations and vertex expansions for LPA*, L-GLS, and GLS in the four consecutive environments (a) for each search episode; (b) for accumulated results}
	\label{lgls:f:result2d}
\end{figure*}

In the first search, LPA* is equivalent to A*, and L-GLS is equivalent to GLS (See Figure~\ref{lgls:f:result_example}.a). LPA* evaluated 390 edges and expanded 45 vertices, whereas L-GLS and GLS both evaluated 61 edges and expanded 314 vertices.

After the first search, only a small part of the environment changed (see Figure~\ref{lgls:f:result_example}.b), opening a shorter passage to the goal. LPA* evaluated 18 edges corresponding to the change, then expanded 4 inconsistent vertices to find the shortest path in the current graph. L-GLS evaluated 4 edges that belong to the new shortest path to the goal, and expanded 4 inconsistent vertices. The GLS evaluated 7 edges and expanded 6 inconsistent vertices. 

When the environment changed in the irrelevant region (see Figure~\ref{lgls:f:result_example}.c), LPA* evaluated 153 edges corresponding to the environment change, but did not expand any vertices, as they were irrelevant to the current search. 
L-GLS did not do any additional operations to find the shortest path, since the path was already optimal. GLS was identical to the previous search with 7 edge evaluations and 6 vertex expansions.

Finally, the environment changed back to the first search episode with the addition to a new obstacle in the irrelevant region. The GLS search was identical to the first search episode with 61 edge evaluations and 314 vertex expansions. On the other hand, L-GLS evaluated only 11 edges and expanded 83 vertices. This is because the majority of the relevant edges were already evaluated during the previous searches, and the majority of the relevant vertices were already consistent. Similarly, LPA* expanded a fewer number of vertices and evaluated a fewer number of edges compared to the first search episode with 273 edge evaluations and 9 vertex expansions, since it utilized the previous search results. These results are illustrated in Figure~\ref{lgls:f:result2d}.

We also implemented LPA* and L-GLS as an OMPL Planner~\cite{Sucan2012} with the 
MoveIt! interface~\cite{Coleman2014} for the 3D piano movers' problem and for the 7D manipulator experiment. 
All the algorithm implementations were in C++, and the experiments were run on an 2.20 GHz Intel(R) Core(TM) i7-8750H CPU Ubuntu 16.04 LTS machine with 15.5GB of RAM. 

We find the shortest paths for the piano from the Apartment scenario in OMPL~\cite{Sucan2012} from a start configruation to a goal configuration without colliding with the moving obstacles (see Figure~\ref{lgls:f:piano}).  
There were three consecutive searches in the environment, where the first search was on scene 1 (Figure~\ref{lgls:f:piano} (a)), the second search was on scene 2 (Figure~\ref{lgls:f:piano} (b)), and the third search was on scene 3 (Figure~\ref{lgls:f:piano} (c)). 
The search was performed on a prebuilt graph with 8,000 vertices and 34,327 edges. The vertices were sampled using a Halton sequence in $\R^3$. 

Similarly, we find the shortest paths for the right arm of PR2 robot from a start configuration to a goal configuration without collision in a dynamic environment where the obstacle moves (see Figure~\ref{lgls:f:pr2}). There were three consecutive searches in the environment, where the first search was on scene 1 (Figure~\ref{lgls:f:pr2} (a)), the second search was on scene 2 (Figure~\ref{lgls:f:pr2} (b)), and the third search was on scene 1 again. 
The search was performed on a prebuilt graph with 30,000 vertices and 168,795 edges. The vertices were sampled using a Halton sequence in $\R^7$, bounded by the PR2 arm's joint-angle bounds. 
Two vertices are adjacent in this graph if the Euclidean distance between them is less than 0.9 rad. 

We compared five different planners including LPA*, L-GLS with infinite-lookahead, GLS with infinite-lookahead, L-GLS with one-step lookahead, and GLS with one-step lookahead.
The number of edge evaluations and the number of vertex expansions along with the approximated planning time are recorded for each search episode and tabulated in Table~\ref{lgls:t:pr2}. 
The approximate planning time was computed as the weighted sum of the number of edge evaluations and the number of vertex expansions. 
For the Piano Movers' problem, an edge evaluation took 0.20~ms on average and a vertex expansion took 0.86~ms on average. 
For the manipulation problem, an edge evaluation took 0.57~ms on average and a vertex expansion took 0.34~ms on average. 

\begin{figure*}[ht]
	\centering
	\begin{subfigure}{0.30\textwidth}
		\includegraphics[width=\myLineScale\linewidth]{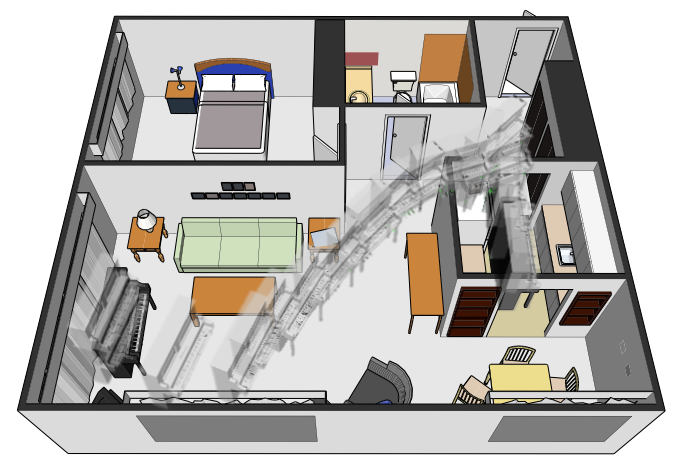}
		\caption{Scene 1}
	\end{subfigure}
	\begin{subfigure}{0.30\textwidth}
		\includegraphics[width=\myLineScale\linewidth]{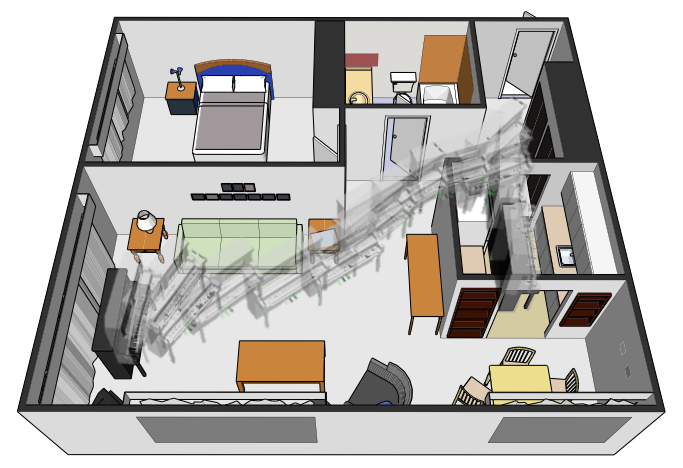}
		\caption{Scene 2}
	\end{subfigure}
	\begin{subfigure}{0.30\textwidth}
		\includegraphics[width=\myLineScale\linewidth]{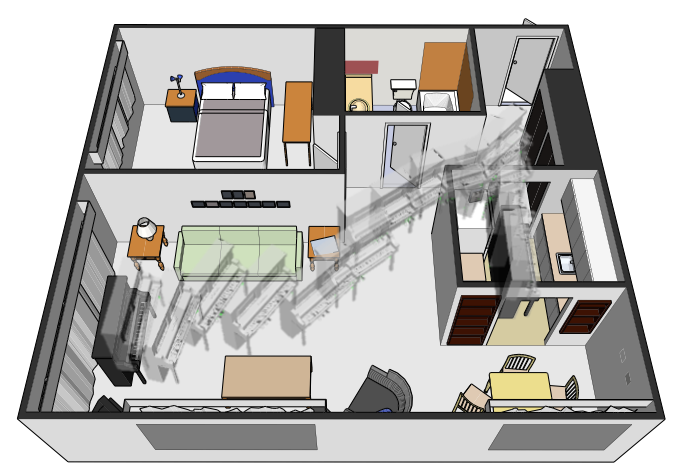}
		\caption{Scene 3}
	\end{subfigure}
	\caption{The shortest paths of the Piano Movers' problems in dynamic environment.}
	\label{lgls:f:piano}
\end{figure*}
\begin{figure*}[ht]
	\centering
	\begin{subfigure}{0.45\textwidth}
		\includegraphics[width=\myLineScale\linewidth]{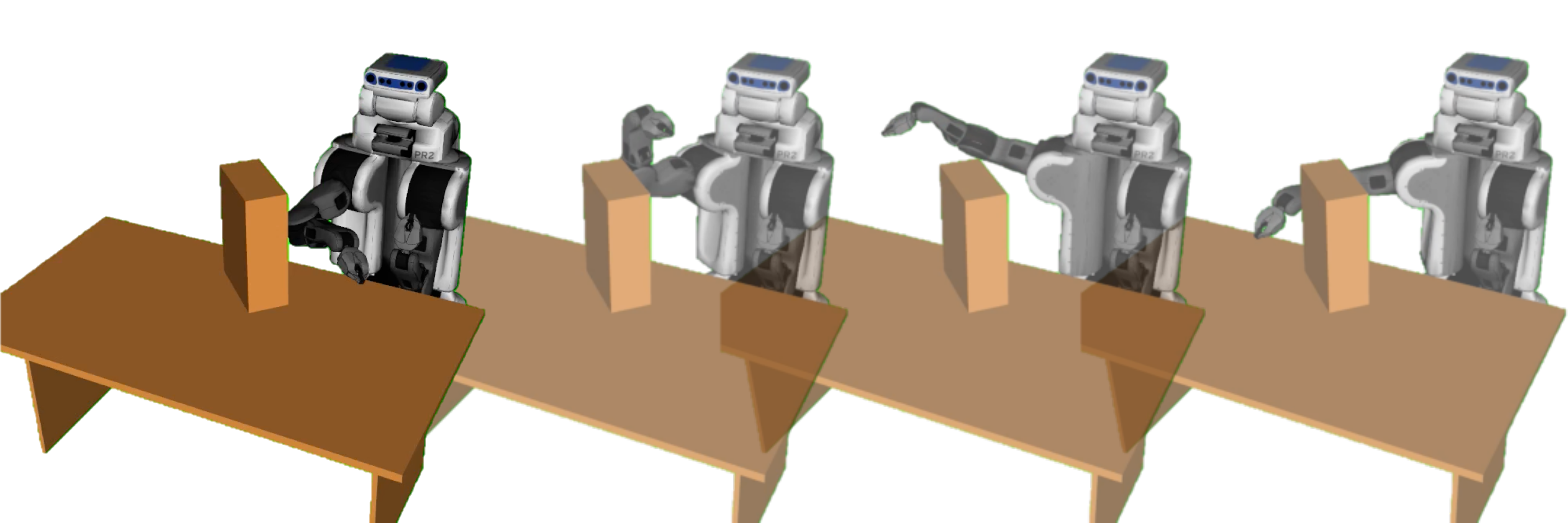}
		\caption{Scene 1}
	\end{subfigure}
	\begin{subfigure}{0.45\textwidth}
		\includegraphics[width=\myLineScale\linewidth]{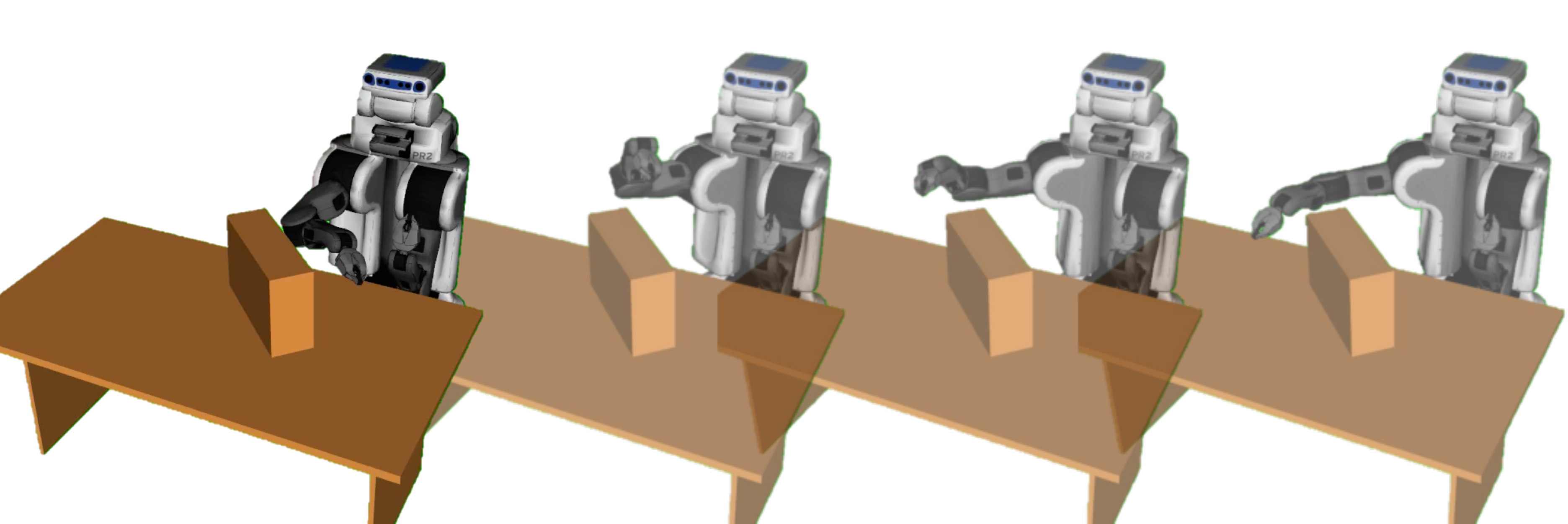}
		\caption{Scene 2}
	\end{subfigure}\hfill
	\caption{The shortest paths of the right arm of PR2 robot for the same query in dynamic environment.}
	\label{lgls:f:pr2}
\end{figure*}

\begin{table*}
	\centering
	\begin{small}
	\begin{tabular}{lccccc}
		\toprule
		\textbf{Piano Movers}      & LPA* & L-GLS($\infty$) & GLS($\infty$) & L-GLS($1$) & GLS($1$) \\ 
		\midrule
		\textbf{First Query} in scene 1    &          &       &          &        &       \\
		\# Edge Evaluation  & 24445 & 1524 & 1524 & 1867 & 1867 \\ 
		\# Vertex Expansion & 124 & 58649 & 58649 & 3612 & 3612 \\
		Total Time (s) & 5.08 & 50.6 & 50.6 & 3.48 & 3.48  \\
		&          &       &          &        &       \\
		\textbf{Second Query} in scene 2    &          &       &          &        &       \\
		\# Edge Evaluation  & 46384 & 46 & 952 & 67 & 1173 \\ 
		\# Vertex Expansion & 31 & 719 & 45085 & 277 & 2266 \\
		Total Time (s) & 9.46 & 0.626 & 38.8 & 0.251 & 2.18 \\		         
		&          &       &          &        &       \\
		\textbf{Third Query} in scene 3    &          &       &          &        &       \\
		\# Edge Evaluation  & 32312 & 37 & 702 & 68 & 872 \\ 
		\# Vertex Expansion & 23 & 379 & 37735 & 115 & 1671 \\
		Total Time (s) & 6.59 & 0.332 & 3.25 & 0.112 & 1.61 \\
		\midrule
		\textbf{PR2}     &  & &  & & \\ 
		\midrule
		\textbf{First Query} in scene 1    &          &       &          &        &       \\
		\# Edge Evaluation  & 10709 & 332 & 332 & 879 & 879 \\ 
		\# Vertex Expansion & 205 & 5427 & 5427 & 1555 & 1555 \\
		Total Time (s) & 6.25 & 2.04 & 2.04 & 1.04 & 1.04  \\
		&          &       &          &        &       \\
		\textbf{Second Query} in scene 2    &          &       &          &        &       \\
		\# Edge Evaluation  & 49251 & 7 & 18 & 20 & 81 \\ 
		\# Vertex Expansion & 9 & 37 & 155 & 33 & 139 \\
		Total Time (s) & 28.4 & 0.017 & 0.063 & 0.023 & 0.094 \\		         
		&          &       &          &        &       \\
		\textbf{Third Query} in scene 1    &          &       &          &        &       \\
		\# Edge Evaluation  & 13024 & 52 & 332 & 147 & 879 \\ 
		\# Vertex Expansion & 195 & 718 & 5427 & 363 & 1555 \\
		Total Time (s) & 7.58 & 0.275 & 2.04 & 0.209 & 1.04 \\		     				     		    
		\bottomrule
	\end{tabular}
	\end{small}
	\caption{Number of edge evaluations, number of vertex expansions, and approximated planning time for different planners over three consecutive search queries in a dynamic environment.}
	\label{lgls:t:pr2}
\end{table*}

\section{Conclusion}

We have presented a new replanning algorithm to find the shortest path in a given graph efficiently using previous search results. The proposed algorithm maintains a lazy LPA* tree to efficiently repair the inconsistency of the existing search that arises either from external environment changes or internal discrepancies between the lazy estimate and the real weight of an edge cost. 
Based on the efficiency of LPA*, the propagation of vertex rewiring to repair any vertex inconsistencies is restricted only to the shortest path candidate.
Similar to the GLS framework, only the edges in the current shortest path candidate are evaluated. 
The proposed algorithm reduces by a substantial amount the edge evaluations per search compared to LPA*, and it can find a new shortest path significantly faster than  GLS, given a change in the graph. 
The completeness and correctness of the proposed algorithm are shown.
We prove that the Lifelong-GLS algorithm returns a solution that is no longer than the optimal solution by the product of two factors, namely the heuristic inflation factor and the truncation factor. 
Numerical simulations demonstrate the efficiency of the proposed algorithm compared to both LPA* and GLS in a dynamically changing environment.

 \section*{Acknowledgement}
 We would like to thank Aditya Mandalika for his help on setting up the benchmark testing for the piano movers' problem. 
 This work has been supported by ARL under DCIST CRA W911NF-17-2-0181 and SARA CRA W911NF-20-2-0095 and NSF under award IIS-2008686.

\bibliographystyle{plainnat}
\bibliography{bib/references}

\end{document}